\definecolor{cvprblue}{rgb}{0.21,0.49,0.74}
\theoremstyle{plain}
\newtheorem{theorem}{Theorem}[section]
\newtheorem{proposition}[theorem]{Proposition}
\theoremstyle{definition}
\theoremstyle{remark}
\crefname{equation}{Eq.}{Eqs.}
\Crefname{equation}{Equation}{Equations}
\crefname{figure}{Fig.}{Figs.}
\Crefname{figure}{Figure}{Figures}
\crefname{table}{Tab.}{Tabs.}
\Crefname{table}{Table}{Tables}
\crefname{algocf}{Alg.}{Algs.}
\Crefname{algocf}{Algorithm}{Algorithms}
\crefname{section}{Sec.}{Secs.}
\Crefname{section}{Section}{Sections}
\crefname{appendix}{App.}{Apps.}
\Crefname{appendix}{Appendix}{Appendices}
\crefname{theorem}{Thm.}{Thms.}
\Crefname{theorem}{Theorem}{Theorems}
\crefname{lemma}{Lem.}{Lems.}
\Crefname{lemma}{Lemma}{Lemmas}
\crefname{definition}{Def.}{Defs.}
\Crefname{definition}{Definition}{Definitions}
\crefname{corollary}{Cor.}{Cors.}
\Crefname{corollary}{Corollary}{Corollaries}
\crefname{remark}{Rem.}{Rems.}
\Crefname{remark}{Remark}{Remarks}
\crefname{proposition}{Prop.}{Props.}
\Crefname{proposition}{Proposition}{Propositions}
\crefname{proof}{Pr.}{Prs.}
\Crefname{proof}{Proof}{Proofs}
\providecommand{\bsym}[1]{\boldsymbol{\mathrm{#1}}}
\providecommand{\rieexp}{\mathrm{Exp}}
\providecommand{\rielog}{\mathrm{Log}}
\providecommand{\bscal}[1]{\boldsymbol{\mathcal{#1}}}
\definecolor{lightgray}{gray}{0.9}
\title{Learning to Normalize on the SPD Manifold under Bures-Wasserstein Geometry}
\author{Rui Wang$^{1}$\thanks{Equal contribution.}, Shaocheng Jin$^{1}$\footnotemark[1],  Ziheng Chen$^{2}$\thanks{Corresponding Author}, Xiaoqing Luo$^{1}$, Xiao-Jun Wu$^{1 }$\\
\textsuperscript{1}School of Artificial Intelligence and Computer Science, Jiangnan University, Wuxi, China\\
\textsuperscript{2}Department of Information Engineering and Computer Science, University of Trento, Trento, Italy\\
{\tt \{cs\_wr, xqluo, wu\_xiaojun\}@jiangnan.edu.cn}\\
{\tt \{6233112021\}@stu.jiangnan.edu.cn, ziheng\_ch@163.com}
}
\begin{document}
\maketitle
\begin{abstract}
     Covariance matrices have proven highly effective across many scientific fields. Since these matrices lie within the Symmetric Positive Definite (SPD) manifold—a Riemannian space with intrinsic non-Euclidean geometry, the primary challenge in representation learning is to respect this underlying geometric structure. Drawing inspiration from the success of Euclidean deep learning, researchers have developed neural networks on the SPD manifolds for more faithful covariance embedding learning. A notable advancement in this area is the implementation of Riemannian batch normalization (RBN), which has been shown to improve the performance of SPD network models. Nonetheless, the Riemannian metric beneath the existing RBN might fail to effectively deal with the ill-conditioned SPD matrices (ICSM), undermining the effectiveness of RBN. In contrast, the Bures-Wasserstein metric (BWM) demonstrates superior performance for ill-conditioning. In addition, the recently introduced Generalized BWM (GBWM) parameterizes the vanilla BWM via an SPD matrix, allowing for a more nuanced representation of vibrant geometries of the SPD manifold. Therefore, we propose a novel RBN algorithm based on the GBW geometry, incorporating a learnable metric parameter. Moreover, the deformation of GBWM by matrix power is also introduced to further enhance the representational capacity of GBWM-based RBN. Experimental results on different datasets validate the effectiveness of our proposed method. The code is available at \url{https://github.com/jjscc/GBWBN}.
\end{abstract}
\section{Introduction}
\label{sec:intro}
Covariance matrices are prevalent in numerous scientific fields. Due to their capability to capture high-order spatiotemporal fluctuations in data, they have proven to be highly effective 
in various applications, including Brain-Computer Interfaces (BCI) \citep{bci,kobler2022spd, matt}, Magnetic Resonance Imaging (MRI) \cite{PenX,dai,manifoldnet}, drone recognition \cite{spdnetbn,chen2023riemannian,chen2024liebn}, node classification \cite{nguyen2021geomnet,chen2021hybrid, RResNet,chen2024rmlr}, 
and action recognition \cite{spdnet,jdrml, chen2024adaptive,nguyen2024matrix,dspdnet,smdml,chen2025gyrogroup}. However, the geometry of non-singular covariance matrices doesn't conform to that of a vector space; instead, it forms a Riemannian manifold, known as the SPD manifold. This inherent difference prevents the direct 
application of Euclidean learning methods to such data. To bridge this gap, several Riemannian metrics have been developed, such as the Log-Euclidean Metric (LEM) \cite{ArsV}, Log-Cholesky Metric (LCM) \cite{lin2019riemannian}, and Affine Invariant Metric (AIM) \cite{PenX}, extending Euclidean computations to the SPD manifold.

Deep Neural Networks (DNNs)~\cite{resnet,vgg} have achieved remarkable breakthroughs over traditional shallow learning models in the field of pattern recognition and computer vision, primarily due to their ability to learn deep, nonlinear representations. Building on this success, some researchers have generalized the ideology of DNNs to Riemannian manifolds. Notably, SPDNet \cite{spdnet} stands out as a classical Riemannian neural network, providing a multi-stage, end-to-end nonlinear learning framework specifically designed for SPD matrices. In addition, numerous Euclidean-based model architectures, including Recurrent Neural Network (RNN)~\cite{chakraborty2018statistical,nguyen2022gyro}, Convolutional Neural Network (CNN)~\cite{manifoldnet}, and Transformers~\cite{ yang2024hypformer}, have been successfully adapted to operate on manifolds. In parallel, researchers have also investigated the fundamental building blocks necessary for Riemannian networks, such as 
pooling \cite{symnet,chen2025understanding}, 
classification \cite{spdmlr,nguyen2023building}, and transformation \cite{nguyen2023building,nguyen2024matrix}.

Traditional Batch Normalization (BatchNorm) has been instrumental in facilitating network training \cite{ioffe2015batch}, which motivates the development of Riemannian BatchNorm (RBN) over the SPD manifold. Among them, \citet{spdnetbn} introduced RBN for SPD networks, utilizing the AIM geometry. \citet{kobler2022spd} further advanced this methodology by extending control over the Riemannian variance. In parallel, \citet{chakraborty2020manifoldnorm} proposed two RBN frameworks and instantiated them on the SPD manifold under LEM and AIM geometries by incorporating first- and second-order statistics. More recently, \citet{chen2024liebn} developed the RBN over Lie groups, specifically showcasing their framework on the SPD manifold under the AIM, LEM, and LCM geometries. 

However, the latent metric geometries beneath the existing RBN might be ineffective in dealing with ill-conditioned SPD matrices (ICSM), which are yet prevalent in SPD modeling. Specifically, as covariance matrices are positive semi-definite, the following regularization is widely adopted: $\mathbf{X} \gets \mathbf{X} + \lambda \mathbf{I}_d$, where $\mathbf{I}_d$ is a $d$-by-$d$ identity matrix and $\lambda$ denotes a perturbation hyperparameter. Although such a strategy ensures the positive definite property of $\mathbf{X}$, it offers only limited improvement in alleviating ill-conditioning. For any SPD matrix $\mathbf{X}$, the condition number can be characterized as $\kappa(\mathbf{X})=\nicefrac{\mathrm{eig}_{\max }}{\mathrm{eig}_{\min }}$, where $\mathrm{eig}_{\max }$ and $\mathrm{eig}_{\min }$ denotes its maximal and minimal eigenvalues. Therefore, the post-regularization SPD matrix might be ill-conditioned. \textbf{\cref{lambda_eigenvalue_teaser} summarizes the ill-conditioning observed across all three used datasets, highlighting its widespread presence}.

\citet{han2021riemannian} show that the AIM exhibits a quadratic dependence on SPD matrices, which hinders effective learning over ICSM. In contrast, the BWM \cite{bhatia2019bures} has a linear dependence, offering advantages for ill-conditioning. 
Thereby, we resort to BWM for RBN. Besides, \citet{han2023learning} further updated BWM into a generalized version, named GBWM, by an SPD parameter. The GBWM is correlated with the AIM, as it is locally AIM \cite{han2023learning}. By setting the SPD parameter in GWBM to be learnable, the proposed BWM-based RBN algorithm is further extended into the one based on the GBWM. Compared with the existing methods, our method can not only better deal with ICSM, but also adapts to the evolving latent SPD geometry through the dynamic geometric parameter. Besides, we use the deformation of matrix power \cite{thanwerdas2022geometry} to further improve the representational capacity of the suggested GBWM-based RBN. For simplicity, we refer to the proposed method as GBWBN. \cref{summary} summarizes the existing RBN methods compared with ours. The effectiveness of our GBWBN is validated across three different benchmarking datasets involving two signal classification tasks, \textit{i.e.}, skeleton-based action recognition and EEG classification. In summary, our main contributions are as follows:

\begin{table}[!t]
    \centering
    \resizebox{0.85\linewidth}{!}{
    \begin{tabular}{ccc}
    \toprule
    Dataset & $\kappa>10^{3}$  & $\kappa>10^{5}$  \\
    \midrule
    MAMEM-SSVEP-II & 113 (22.6\%) & 11 (2.2\%) \\
    HDM05 & 2086 (100\%) & 2086 (100\%) \\
    NTU RGB+D & 56,880 (100\%) & 56,880 (100\%) \\
    \bottomrule
    \end{tabular}}
    \caption{
    Statistics (number and percentage) on the condition number ($\kappa = {\rm{eig}}_{\rm{max}} / {\rm{eig}}_{\rm{min}}$) of the SPD features without the RBN layer on different datasets. The statistics are summarized in the final epoch. This is a teaser table from \cref{combined_lambda_eigenvalue}.
    }
    \label{lambda_eigenvalue_teaser}
\end{table}

\begin{table}[!t]
    \centering
    \resizebox{1.0\linewidth}{!}{
    \begin{tabular} {c c c}
    \toprule
    \text { Methods } & \text { Involved Statistics}  & \text { Geometries } \\
    \midrule
    \text { SPDBN \cite{spdnetbn} } & \text { M } & \text { AIM } \\
    \text { SPDDSMBN \cite{kobler2022spd}} & \text { M+V } &  \text { AIM } \\
    \text { ManifoldNorm \cite[Algs. 1-2]{chakraborty2020manifoldnorm} } & \text { M+V } &  \text { AIM and LEM } \\
    \text { LieBN } \cite{chen2024liebn} & \text { M+V } & \text { Power-deformed AIM, LEM, and LCM } \\
    \midrule
    \text { GBWBN (Ours) } & \text { M+V } & \text { Power-deformed GBWM } \\
    \bottomrule
    \end{tabular}
    }
    \caption{Summary of several representative RBN methods on the SPD manifold, where M and V represent the mean and variance.}
    \label{summary}
\end{table}

\begin{itemize}
    \item[$\bullet$] \textbf{Challenging ICSM, effective countermeasure:} A new RBN algorithm based on the BW geometry—the first work capable of effectively addressing ICSM learning;
    \item[$\bullet$] \textbf{Learnable normalization, flexible method:} Our BWM-based RBN is extended to a learnable version incorporated with an SPD parameter and a matrix power-based nonlinear deformation of the metric geometry, aiming to generate a flexible and refined normalization space;
    \item [$\bullet$] \textbf{Experimental validity, functional pluggability:} Extensive empirical evaluations of our GBWBN on three benchmarking datasets under different SPD backbone networks.
\end{itemize}

\section{Preliminaries}
\label{sec:Preli}
The set of SPD matrices, denoted as 
$\boldsymbol{{\mathcal{S}}_{++}^d}:= \{\boldsymbol{\mathrm{X}}\in\mathbb{R}^{d\times d},\boldsymbol{\mathrm{X}}=\boldsymbol{\mathrm{X}}^{\rm{T}},\boldsymbol{v}^{\rm{T}}\boldsymbol{\mathrm{X}v}>0,\forall \boldsymbol{v}\in\mathbb{R}^d\backslash\{0_d\}\}$, forms a smooth manifold, known as the SPD manifold.
The tangent space at $\boldsymbol{\mathrm{X}}$ is denoted as 
${T}_{\boldsymbol{\mathrm{X}}}\boldsymbol{{\mathcal{S}}_{++}^d} 
\cong \boldsymbol{{\mathcal{S}}^d}$, where $\boldsymbol{{\mathcal{S}}^d}$ is the Euclidean space of real symmetric matrices.
Given $\boldsymbol{\mathrm{X}} \in  \boldsymbol{{\mathcal{S}}_{++}^d}$, the Riemannian metric at $\boldsymbol{\mathrm{X}}$ is denoted as: $g_{\boldsymbol{\mathrm{X}}}:{T}_{\boldsymbol{\mathrm{X}}}\boldsymbol{{\mathcal{S}}_{++}^d}\times {T}_{\boldsymbol{\mathrm{X}}}\boldsymbol{{\mathcal{S}}_{++}^d}\to \mathbb{R}$, which is often written as an inner product $\langle\cdot ,\cdot\rangle_{\boldsymbol{\mathrm{X}}}$. The induced norm of a tangent vector $\boldsymbol{\mathrm{S}}\in {T}_{\boldsymbol{\mathrm{X}}}\boldsymbol{{\mathcal{S}}_{++}^d}$ is given by $\left \| \boldsymbol{\mathrm{S}} \right \| _{\boldsymbol{\mathrm{X}}} = \sqrt{\langle \boldsymbol{\mathrm{S}},\boldsymbol{\mathrm{S}}\rangle_{\boldsymbol{\mathrm{X}}}}$.
Given $\boldsymbol{\mathrm{X}}_{1}, \boldsymbol{\mathrm{X}}_{2} \in \boldsymbol{{\mathcal{S}}_{++}^d} $, $\boldsymbol{\mathrm{S}}_{1}, \boldsymbol{\mathrm{S}}_{2} \in{T}_{\boldsymbol{\mathrm{X}}_{1}}\boldsymbol{{\mathcal{S}}_{++}^d}$, the Riemannian operators under BWM are shown in \cref{opera}. Wherein, the $\mathrm{tr}(\boldsymbol{\mathrm{S}})$ denotes the matrix trace of $\boldsymbol{\mathrm{S}}$, $\mathrm{vec}({\boldsymbol{\mathrm{S}}}_{1})$ and $\mathrm{vec}({\boldsymbol{\mathrm{S}}_{2}})$ are the vectorizations of $\boldsymbol{\mathrm{S}}_{1}$ and $\boldsymbol{\mathrm{S}}_{2}$, $\boldsymbol{\mathrm{\lambda}}_{i},\boldsymbol{\mathrm{\lambda}}_{i},\boldsymbol{\mathrm{\delta}}_{i},\boldsymbol{\mathrm{\delta}}_{j}$ represent the \textit{i}-th and \textit{j}-th eigenvalues obtained by SVD of $\boldsymbol{\mathrm{X}}_{1}$ and $\boldsymbol{\mathrm{X}}_{2}$ respectively.
$\boldsymbol{\mathrm{X}}\oplus\boldsymbol{\mathrm{X}}=\boldsymbol{\mathrm{X}}\otimes \boldsymbol{\mathrm{I}}+\boldsymbol{\mathrm{I}}\otimes \boldsymbol{\mathrm{X}}$ denotes the Kronecker sum and $\mathcal{L}_{\boldsymbol{\mathrm{X}}_{1}}(\boldsymbol{\mathrm{S}}_{1})$ denotes the Lyapunov operator, \textit{i.e.,} ${\boldsymbol{\mathrm{X}}_{1}\mathcal{L}_{\boldsymbol{\mathrm{X}}_{1}}(\boldsymbol{\mathrm{S}}_{1})}+{\mathcal{L}_{\boldsymbol{\mathrm{X}}_{1}}(\boldsymbol{\mathrm{S}}_{1})\boldsymbol{\mathrm{X}}_{1}}={\boldsymbol{\mathrm{S}}_{1}}$.
The Lyapunov operator can be solved by eigendecomposition~\cite{han2021riemannian}:
\begin{equation}
\label{ly_solve}
    \mathcal{L}_{\boldsymbol{\mathrm{X}}_{1}}({\boldsymbol{\mathrm{S}}}_{1})= {\boldsymbol{\mathrm{V}}}\left[\frac{{\boldsymbol{\mathrm{S}'_{1}}}_{i,j}}{{\boldsymbol{\mathrm{\delta}}}_{i}+{\boldsymbol{\mathrm{\delta}}}_{j}}\right]_{i,j}{\boldsymbol{\mathrm{V}}^{T}},
\end{equation}
where $\boldsymbol{\mathrm{S'}}_{1} = \boldsymbol{\mathrm{V}}^{T} \boldsymbol{\mathrm{S}}_{1} \boldsymbol{\mathrm{V}}$, $\boldsymbol{\mathrm{X}}_{1} = \boldsymbol{\mathrm{V}}\boldsymbol{\mathrm{\Delta}}\boldsymbol{\mathrm{V}}^{T}$ is the eigendecomposition of $\boldsymbol{\mathrm{X}}_{1}\in\boldsymbol{\mathcal{S}_{++}^d}$.

Although $\boldsymbol{\mathrm{X}}_{1}\boldsymbol{\mathrm{X}}_{2}$ is not SPD, 
the following equation of $(\boldsymbol{\mathrm{X}}_{1}\boldsymbol{\mathrm{X}}_{2})^{\frac{1}{2}}$ makes each (inverse) square root can be computed using SVD \cite{minh2022alpha}.
\begin{equation}
\label{sqr}
(\boldsymbol{\mathrm{X}}_{1}\boldsymbol{\mathrm{X}}_{2})^{\frac{1}{2}}=\boldsymbol{\mathrm{X}}_{1}^{\frac{1}{2}}(\boldsymbol{\mathrm{X}}_{1}^{\frac{1}{2}}\boldsymbol{\mathrm{X}}_{2}\boldsymbol{\mathrm{X}}_{1}^{\frac{1}{2}})^{\frac{1}{2}}\boldsymbol{\mathrm{X}}_{1}^{-\frac{1}{2}}.
\end{equation}

The geodesic is not complete on the Riemannian manifold $(\boldsymbol{{\mathcal{S}}_{++}^d}{g}^{\mathrm{BW}})$ \cite{malago2018wasserstein}. In \cref{opera},  the variable $t$ in the geodesic depends on as follows: $\boldsymbol{\mathrm{I}}_{d}+t\mathcal{L}_{\boldsymbol{\mathrm{X}}_{1}}(\boldsymbol{\mathrm{S}}_{1})\in\boldsymbol{{\mathcal{S}}_{++}^d}$. For parallel transport, $\boldsymbol{\mathrm{X}}_{1},\boldsymbol{\mathrm{X}}_{2}$ must be commutative matrices \cite{thanwerdas2023n}.

For any two $\boldsymbol{\mathrm{S}}_{1}, \boldsymbol{\mathrm{S}}_{2} \in T_{\boldsymbol{\mathrm{X}}_{1}}{\boldsymbol{{\mathcal{S}}_{++}^d}}$, the AIM on the SPD manifold can be rewritten as:
\begin{equation}
\begin{split}
    \label{eq:metric_aim}
    {g}_{\boldsymbol{\mathrm{X}}_{1}}^{\mathrm{AI}}(\boldsymbol{\mathrm{S}}_{1},\boldsymbol{\mathrm{S}}_{2})&=\mathrm{tr}(\boldsymbol{\mathrm{X}}_{1}^{-1}\boldsymbol{\mathrm{S}}_{1}\boldsymbol{\mathrm{X}}_{1}^{-1}\boldsymbol{\mathrm{S}}_{2})\\&=\mathrm{vec}(\boldsymbol{\mathrm{S}}_{1})^{T}(\boldsymbol{\mathrm{X}}_{1}\otimes \boldsymbol{\mathrm{X}}_{1})^{-1}\mathrm{vec}(\boldsymbol{\mathrm{S}}_{2}),
\end{split}
\end{equation}
where $\otimes$ denotes the Kronecker product. Comparing the 
Riemannian metric based on the BWM in \cref{opera} and \cref{eq:metric_aim}, the AIM has a quadratic dependence on $\boldsymbol{\mathrm{X}}_{1}$ while the BWM has a linear dependence \cite{han2021riemannian}, revealing the BWM is more suitable for learning ICSM.

For any two $\boldsymbol{\mathrm{S}}_{1}, \boldsymbol{\mathrm{S}}_{2} \in T_{\boldsymbol{\mathrm{X}}}{\boldsymbol{{\mathcal{S}}_{++}^d}}$, the GBWM can be written as \cite{han2023learning}
{\small
\begin{align}
    \label{eq:metric_gbwm}
    {g}_{\boldsymbol{\mathrm{X}}}^{\mathrm{GBW}}(\boldsymbol{\mathrm{S}}_{1},\boldsymbol{\mathrm{S}}_{2})&=\frac{1}{2}{\mathrm{tr}}\left(\mathcal{L}_{\boldsymbol{\mathrm{X}},\boldsymbol{\mathrm{M}}}\left(\boldsymbol{\mathrm{S}}_{1}\right)\boldsymbol{\mathrm{S}}_{2}\right) \\&=\frac{1}{2}\mathrm{vec}(\boldsymbol{\mathrm{S}}_{1})^{T}(\boldsymbol{\mathrm{X}}\otimes\boldsymbol{\mathrm{M}}+\boldsymbol{\mathrm{M}}\otimes\boldsymbol{\mathrm{X}})^{-1}\mathrm{vec}(\boldsymbol{\mathrm{S}}_{2}), \nonumber
\end{align}
}where $\boldsymbol{\mathrm{M}}\in\boldsymbol{{\mathcal{S}}_{++}^d}$, $\mathcal{L}_{\boldsymbol{\mathrm{X}},\boldsymbol{\mathrm{M}}}(\boldsymbol{\mathrm{S}})$ is the generalized Lyapunov operator, which is the solution to the matrix equation ${\boldsymbol{\mathrm{X}}\mathcal{L}_{\boldsymbol{\mathrm{X}},\boldsymbol{\mathrm{M}}}(\boldsymbol{\mathrm{S}})\boldsymbol{\mathrm{M}}}+\boldsymbol{\mathrm{M}}{\mathcal{L}_{\boldsymbol{\mathrm{X}},\boldsymbol{\mathrm{M}}}(\boldsymbol{\mathrm{S}})\boldsymbol{\mathrm{X}}}={\boldsymbol{\mathrm{S}}}, \boldsymbol{\mathrm{S}}\in\boldsymbol{{\mathcal{S}}^d}$. Choosing $\boldsymbol{\mathrm{M}}$ is equivalent to selecting a suitable metric. When $\boldsymbol{\mathrm{M}}=\boldsymbol{\mathrm{I}}$, it reduces to the BWM. When $\boldsymbol{\mathrm{M}}=\boldsymbol{\mathrm{X}}$, it coincides locally with the AIM \cite{han2023learning}.
Consequently, it connects the BWM and AIM (locally). 

\begin{table}[!t]
    \centering
    \resizebox{0.99\linewidth}{!}{
    \begin{tabular}{c}
    \toprule
    Riemannian operators under the BWM \\
    \midrule
    \begin{tcolorbox}[colback=gray!20, boxrule=0pt, colframe=white, halign=center, height=0.5cm, valign=center]
    \textbf{Riemannian metric} 
    \end{tcolorbox} \\
    \begin{minipage}{\linewidth}
    $
    \begin{array}{rcl}
        {g}_{\boldsymbol{\mathrm{X}}_{1}}^{\mathrm{BW}}(\boldsymbol{\mathrm{S}}_{1},\boldsymbol{\mathrm{S}}_{2}) & = & \frac{1}{2}\mathrm{tr}\left(\mathcal{L}_{\boldsymbol{\mathrm{X}}_{1}}(\boldsymbol{\mathrm{S}}_{1})\boldsymbol{\mathrm{S}}_{2}\right) \\
        & = & \frac{1}{2}\mathrm{vec}(\boldsymbol{\mathrm{S}}_{1})^{T}(\boldsymbol{\mathrm{X}}_{1} \oplus \boldsymbol{\mathrm{X}}_{1})^{-1}\mathrm{vec}(\boldsymbol{\mathrm{S}}_{2})
    \end{array}
    $
    \end{minipage} \\
    \begin{tcolorbox}[colback=gray!20, boxrule=0pt, colframe=white, halign=center, height=0.5cm, valign=center]
    \textbf{Riemannian distance} 
    \end{tcolorbox} \\
    \begin{minipage}{\linewidth}
    $d_{\mathrm{BW}}^{2}(\boldsymbol{\mathrm{X}}_{1},\boldsymbol{\mathrm{X}}_{2}) = \mathrm{tr}({\boldsymbol{\mathrm{X}}_{1}}) + \mathrm{tr}({\boldsymbol{\mathrm{X}}_{2}}) - 2\left(\mathrm{tr}\left(\boldsymbol{\mathrm{X}}_{1}^{\frac{1}{2}} \boldsymbol{\mathrm{X}}_{2} \boldsymbol{\mathrm{X}}_{1}^{\frac{1}{2}}\right)\right)^{\frac{1}{2}}$
    \end{minipage} \\
    \begin{tcolorbox}[colback=gray!20, boxrule=0pt, colframe=white, halign=center, height=0.5cm, valign=center]
    \textbf{Riemannian geodesic}
    \end{tcolorbox} \\
    \begin{minipage}{\linewidth}
    \centering
    $\gamma_{\boldsymbol{\mathrm{X}}_{1},\boldsymbol{\mathrm{S}}_{1}}(t) = \boldsymbol{\mathrm{X}}_{1} + t\boldsymbol{\mathrm{S}}_{1} + t^2 \mathcal{L}_{\boldsymbol{\mathrm{X}}_{1}}(\boldsymbol{\mathrm{S}}_{1}) \boldsymbol{\mathrm{X}}_{1} \mathcal{L}_{\boldsymbol{\mathrm{X}}_{1}}(\boldsymbol{\mathrm{S}}_{1})$
    \end{minipage} \\
    \begin{tcolorbox}[colback=gray!20, boxrule=0pt, colframe=white, halign=center, height=0.5cm, valign=center]
    \textbf{Riemannian exponential map}
    \end{tcolorbox} \\
    \begin{minipage}{\linewidth}
    \centering
    $\mathrm{Exp}_{\boldsymbol{\mathrm{X}}_{1}}\boldsymbol{\mathrm{S}}_{1} = \boldsymbol{\mathrm{X}}_{1} + \boldsymbol{\mathrm{S}}_{1} + \mathcal{L}_{\boldsymbol{\mathrm{X}}_{1}}(\boldsymbol{\mathrm{S}}_{1}) \boldsymbol{\mathrm{X}}_{1} \mathcal{L}_{\boldsymbol{\mathrm{X}}_{1}}(\boldsymbol{\mathrm{S}}_{1})$
    \end{minipage} \\
    \begin{tcolorbox}[colback=gray!20, boxrule=0pt, colframe=white, halign=center, height=0.5cm, valign=center]
    \textbf{Riemannian logarithmic map}
    \end{tcolorbox} \\
    \begin{minipage}{\linewidth}
    \centering
    $\mathrm{Log}_{\boldsymbol{\mathrm{X}}_{1}}\boldsymbol{\mathrm{X}}_{2} = (\boldsymbol{\mathrm{X}}_{2} \boldsymbol{\mathrm{X}}_{1})^{\frac{1}{2}} + (\boldsymbol{\mathrm{X}}_{1} \boldsymbol{\mathrm{X}}_{2})^{\frac{1}{2}} - 2 \boldsymbol{\mathrm{X}}_{1}$
    \end{minipage} \\
    \begin{tcolorbox}[colback=gray!20, boxrule=0pt, colframe=white, halign=center, height=0.5cm, valign=center]
    \textbf{Parallel Transport} 
    \end{tcolorbox} \\
    \begin{minipage}{\linewidth}
    \centering
    $\Gamma _{\boldsymbol{\mathrm{X}}_{1}\to \boldsymbol{\mathrm{X}}_{2}}(\boldsymbol{\mathrm{S}})
    =\boldsymbol{\mathrm{U}}\left [ \sqrt{\frac{\boldsymbol{\mathrm{\delta}}_{i}+\boldsymbol{\mathrm{\delta}}_{j}}{\boldsymbol{\mathrm{\lambda }}_{i}+\boldsymbol{\mathrm{\lambda}}_{j}} }\left [ \boldsymbol{\mathrm{S'}}  \right ]_{i,j}\right]_{i,j}\boldsymbol{\mathrm{U}}^{T}$
    \end{minipage} \\
    \bottomrule
    \end{tabular}
    }
    \caption{Basic operators based on the BWM \cite{bhatia2019bures}.}
    \label{opera}
\end{table}
\section{Proposed method}
\label{sec:Propose}
In this section, we begin by introducing RBN based on the BWM, from which the GBWM-based batch normalization is derived. This foundation enables a smooth extension of RBN to integrate the broader capabilities of GBWM.
Due to page limitation, all the proofs are placed in \cref{proof:main}.

\subsection{RBN based on the BWM}
This subsection begins with an introduction to the BWM-based Riemannian mean. We then generalize the parallel transport, extending it from tangent vectors to data points on the SPD manifold. Following this, we detail the data scaling process. Finally, the aforementioned components are integrated to achieve manifold-valued batch normalization.

\subsubsection{Riemannian mean and variance}
Let $\boldsymbol{\mathrm{X}}_1,\dots,\boldsymbol{\mathrm{X}}_N$ denote $N$ points on the SPD manifold, $\boldsymbol{{\Omega}}=[{\omega}_1,\dots,{\omega}_N]$ be a weight vector, satisfying ${\omega }_{i}\ge {0}$ and $\sum_i {\omega}_{i}= 1$. Denoting the weighted Fréchet mean (WFM) as $\mathbb{E}_{N}^{\boldsymbol{\Omega}}{(\{\boldsymbol{\mathrm{X}}_{i}\}_{i\le {N}})}$, the Riemannian barycenter, signified as $\boldsymbol{\mathcal{B}}$, we can have the following:  
\begin{equation}
\label{eq:mean}
\begin{split}
   \boldsymbol{\mathcal{B}}=\mathbb{E}_{N}^{\boldsymbol{\Omega}}{(\{\boldsymbol{\mathrm{X}}_{i}\}_{i\le {N}})} = \underset{\boldsymbol{\mathrm{G}}\in\boldsymbol{\mathcal{S}}_{++}^d}{\mathrm{arg\,min}}\sum_{i=1}^{N}\omega_i{d_{\mathrm{BW}}^{2}}(\boldsymbol{\mathrm{G}},\boldsymbol{\mathrm{X}}_{i}).
\end{split}
\end{equation}
When $\omega_{i}=\frac{1}{N}$, \cref{eq:mean} simplifies to the Fréchet mean(FM). The Fréchet variance $\boldsymbol{\mathrm{\nu}}^{2}$ is the value achieved at the minimizer of the Fréchet mean:
\begin{equation}
\label{eq:var}
\begin{split}
    \boldsymbol{\mathrm{\nu}}^{2}=\frac{1}{N}\sum_{i=1}^{N}{d_{\mathrm{BW}}^{2}}(\boldsymbol{\mathcal{B}},\boldsymbol{\mathrm{X}}_{i}).
\end{split}
\end{equation}

In this paper, we use the terms Fréchet mean signifying Riemannian mean, and Fréchet variance signifying Riemannian variance. 
When ${N}={2}$, \textit{i.e.,} $\boldsymbol{\Omega}=(1-\omega,\omega), \omega\in[0,1]$, a closed-formed solution of \cref{eq:mean} exists, corresponding to the geodesic connecting $\boldsymbol{\mathrm{X}}_{1}$ and $\boldsymbol{\mathrm{X}}_{2}$:
\begin{equation}
\label{eq:mean_compute1}
\begin{split}
    \mathbb{E}_2^{\boldsymbol{\Omega}}(\boldsymbol{\mathrm{X}}_{1},\boldsymbol{\mathrm{X}}_{2})&=(1-\omega)^{2}\boldsymbol{\mathrm{X}}_{1}+  {\omega}^{2}\boldsymbol{\mathrm{X}}_{2} + \\
    &\omega(1-\omega)\left((\boldsymbol{\mathrm{X}}_{2}\boldsymbol{\mathrm{X}}_{1})^{\frac{1}{2}}+(\boldsymbol{\mathrm{X}}_{1}\boldsymbol{\mathrm{X}}_{2})^{\frac{1}{2}}\right).
\end{split}
\end{equation}

As shown in \cite{bhatia2019bures}, the minimized solution to \cref{eq:mean} is unique, which is also a solution to a non-linear equation:
\begin{equation}
\label{eq:mean1}
\begin{split}
    \boldsymbol{\mathrm{G}}=\sum_{i=1}^{N}\omega_{i}(\boldsymbol{\mathrm{G}}^{\frac{1}{2}}\boldsymbol{\mathrm{X}}_{i}\boldsymbol{\mathrm{G}}^{\frac{1}{2}})^{\frac{1}{2}}.
\end{split}
\end{equation}

When $N > 2$, the solution to \cref{eq:mean1} can be obtained by a novel iterative method, called fixed-point iteration ~\cite{bhatia2019bures}:
\begin{equation}
\label{eq:mean_compute2}
\begin{split}
    \mathrm{H}(\boldsymbol{\mathrm{G}})=\boldsymbol{\mathrm{G}}^{-\frac{1}{2}}\left(\sum_{i=1}^{N}\omega_{i}(\boldsymbol{\mathrm{G}}^{\frac{1}{2}}\boldsymbol{\mathrm{X}}_{i}\boldsymbol{\mathrm{G}}^{\frac{1}{2}})^{\frac{1}{2}}\right)^{2}\boldsymbol{\mathrm{G}}^{-\frac{1}{2}}.
\end{split}
\end{equation}
Its update role is $\boldsymbol{\mathrm{G}}_{t+1}=\mathrm{H}(\boldsymbol{\mathrm{G}}_{t})$ and converges to $\boldsymbol{\mathcal{B}}$, \textit{i.e.}, $\lim_{t \to \infty} \boldsymbol{\mathrm{G}}_{t}=\boldsymbol{\mathcal{B}}$. Considering that the batch barycenter is a noisy estimation of the true barycenter, a relaxed approximation is sufficient ~\cite{spdnetbn,chen2024liebn}. Thereby, we set the number of iterations to one for computational efficiency. Furthermore, although $\boldsymbol{\mathrm{G}}_{t}$ can be initialized as any arbitrary SPD matrix, it is reasonable to endow it with the arithmetic mean.

\subsubsection{Centering data via parallel transport}
The conventional batch normalization in Euclidean space typically involves centering and biasing a set of samples using vector subtraction and addition. In contrast, there is no vector structure for a curved Riemannian manifold. Given a batch of data points on the SPD manifold, a viable and effective method to relocate features around the batch mean $\boldsymbol{\mathcal{B}}$ or bias them towards a parameter $\boldsymbol{\mathcal{G}}$ is to employ parallel transport (PT).

Drawing inspiration from~\cite{spdnetbn}, we extend the tangent space PT given in \cref{opera} to its manifold counterpart. Let commuting matrices $\boldsymbol{\mathrm{X}}_{1},\boldsymbol{\mathrm{X}}_{2}\in \boldsymbol{{\mathcal{S}}_{++}^d}$, for any $\boldsymbol{\mathrm{P}}\in\boldsymbol{{\mathcal{S}}_{++}^d}$, the PT of $\boldsymbol{\mathrm{P}}$ from $\boldsymbol{\mathrm{X}}_{1}$ to $\boldsymbol{\mathrm{X}}_{2}$ can be realized via the following manifold map $\phi :\boldsymbol{{\mathcal{S}}_{++}^d}\to \boldsymbol{{\mathcal{S}}_{++}^d}$:
    \begin{equation}
    \begin{split}
        \phi(\boldsymbol{\mathrm{P}})=\mathrm{Exp}_{\boldsymbol{\mathrm{X}}_{2}}\left(\Gamma_{\boldsymbol{\mathrm{X}_{1}}\to \boldsymbol{\mathrm{X}}_{2}}\left(\mathrm{Log}_{\boldsymbol{\mathrm{X}}_{1}}(\boldsymbol{\mathrm{P}})\right)\right).
    \end{split}
    \end{equation}

\subsubsection{Batch normalization}
In deep learning, batch normalization is a widely used technique to speed up convergence and mitigate internal covariate shifts \cite{ioffe2015batch}. The standard batch normalization layer applies slightly different transformations to the data during the training and testing. In the training phase, normalization is performed using 
the batch mean $\boldsymbol{\mathcal{B}}_{b}$ and variance $\boldsymbol{\mathrm{\nu}}_{b}^{2}$. Meanwhile, the running statistics—mean $\boldsymbol{\mathcal{B}}_{r}$ and variance $\boldsymbol{\mathrm{\nu}}_{r}^{2}$—are  
iteratively updated, where they are applied for normalization during testing. 

Given a batch of samples $\{\boldsymbol{\mathrm{X}}_i\}_{i=1}^{N}$, the batch normalization based on the BWM is defined as follows:

\textbullet$\,$Data centralizing on the SPD manifold, \textit{i.e.}, shifting $\{\boldsymbol{\mathrm{X}}_i\}_{i=1}^{N}$ with Riemannian mean $\boldsymbol{\mathcal{B}}$ to the identity matrix $\boldsymbol{\mathrm{I}}_{d}$ is given by
\begin{equation}
\label{eq:center}
    \bar{\boldsymbol{\mathrm{X}}}_{i}=\phi_{\boldsymbol{\mathcal{B}}\to \boldsymbol{\mathrm{I}}_{d}}(\boldsymbol{\mathrm{X}}_{i})=\mathrm{Exp}_{\boldsymbol{\mathrm{I}}_{d}}\left(\Gamma_{\boldsymbol{\mathcal{B}}\to \boldsymbol{\mathrm{I}}_{d}}\left(\mathrm{Log}_{\boldsymbol{\mathcal{B}}}\left(\boldsymbol{\mathrm{X}}_{i}\right)\right)\right).
\end{equation}

\textbullet$\,$Then, data scaling can be defined as follows:
\begin{equation}
\label{eq:data scaling}
\begin{split}
    \check{\boldsymbol{\mathrm{X}}}_{i}=\mathrm{Exp}_{\boldsymbol{\mathrm{I}}_{d}}(\frac{\boldsymbol{\mathrm{s}}}{\sqrt{{\boldsymbol{\mathrm{\nu}}}^{2}+{\boldsymbol{\mathrm{\epsilon}}}}}(\mathrm{Log}_{\boldsymbol{\mathrm{I}}_{d}}(\bar{\boldsymbol{\mathrm{X}}}_{i}))),
\end{split}
\end{equation}
where $\boldsymbol{\mathrm{\nu}}^{2}$ denotes the Fréchet variance, $\boldsymbol{\mathrm{s}} \in\mathbb{R}\backslash\{0 \}$ is a scaling factor, and $\boldsymbol{\mathrm{\epsilon}}$ is a small positive constant.

\textbullet$\,$Finally, data biasing is conducted on the SPD manifold, transferring a set of samples from the point $\boldsymbol{\mathrm{I}}_{d}$ to the target point $\boldsymbol{\mathcal{G}}$, as detailed below:
\begin{equation}
\label{eq:bias}
\Tilde{\boldsymbol{\mathrm{X}}}_{i}=\phi_{\boldsymbol{\mathrm{I}}_{d}\to \boldsymbol{\mathcal{G}}}(\check{\boldsymbol{\mathrm{X}}}_{i})=\mathrm{Exp}_{\boldsymbol{\mathcal{G}}}\left(\Gamma_{\boldsymbol{\mathrm{I}}_{d}\to \boldsymbol{\mathcal{G}}}\left(\mathrm{Log}_{\boldsymbol{\mathrm{I}}_{d}}(\check{\boldsymbol{\mathrm{X}}}_{i})\right)\right).
\end{equation}

\subsubsection{Data scaling}
On $\boldsymbol{{\mathcal{S}}_{++}^d}$, several notions of Gaussian density have been proposed \cite{barbaresco2019jean, said2017riemannian}. In this work, we adopt the Gaussian distribution on $\boldsymbol{{\mathcal{S}}_{++}^d}$ with a mean $\boldsymbol{\mathcal{B}}\in\boldsymbol{{\mathcal{S}}_{++}^d}$ and variance $\boldsymbol{\mathrm{\sigma}}^{2}$ from \cite{said2017riemannian}, which is defined as:
\begin{equation}
\label{eq:density}
\begin{split}
    p(\boldsymbol{\mathrm{X}} \mid \boldsymbol{\mathcal{B}
    },\boldsymbol{\mathrm{\sigma}}^{2})=\frac{1}{\zeta(\boldsymbol{\mathrm{\sigma}})}\mathrm{exp}\left(-\frac{d(\boldsymbol{\mathrm{X}},\boldsymbol{\mathcal{B}})^{2}}{2\boldsymbol{\mathrm{\sigma}}^{2}}\right),
\end{split}
\end{equation}
where $\zeta(\boldsymbol{\mathrm{\sigma}})$ is the normalization constant. 
\begin{proposition}
\label{props:control_var}
    Given $N$ SPD matrices $\{{\boldsymbol{\mathrm{X}}_{i}}\}_{i=1}^{N}$, $\boldsymbol{\mathrm{s}} \in\mathbb{R}\backslash\{0 \}$, defining $\boldsymbol{\mathrm{\psi}}_{\boldsymbol{\mathrm{s}}}(\boldsymbol{\mathrm{X}}_{i})=\mathrm{Exp}_{\boldsymbol{\mathrm{I}}_{d}}\left(\boldsymbol{\mathrm{s}}\mathrm{Log}_{\boldsymbol{\mathrm{I}}_{d}}(\boldsymbol{\mathrm{X}}_{i})\right)=\left(\boldsymbol{\mathrm{s}}(\boldsymbol{\mathrm{X}}^{\frac{1}{2}}-\boldsymbol{\mathrm{I}}_{d})+\boldsymbol{\mathrm{I}}_{d}\right)^{2}$, we control the dispersion from $\boldsymbol{\mathrm{I}}_{d}$:
    \begin{equation}
    \label{eq:control_var}
    \sum_{i=1}^{N}\omega_i{d_{\mathrm{BW}}}\left(\boldsymbol{\mathrm{\psi}}_{\boldsymbol{\mathrm{s}}}(\boldsymbol{\mathrm{X}}_{i}),\boldsymbol{\mathrm{I}}_{d}\right)=\boldsymbol{\mathrm{s}}\sum_{i=1}^{N}\omega_i{d_{\mathrm{BW}}}(\boldsymbol{\mathrm{X}}_{i},\boldsymbol{\mathrm{I}}_{d}),
    \end{equation}
where $\{\omega_{i}\}_{i=1}^N$ are weights satisfying ${\omega }_{i}\ge {0}$, $\sum_i {\omega}_{i}= 1$. 
\end{proposition}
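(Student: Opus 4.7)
The plan is to reduce \cref{eq:control_var} to the single-point rescaling identity
\[
d_{\mathrm{BW}}\bigl(\boldsymbol{\mathrm{\psi}}_{\boldsymbol{\mathrm{s}}}(\boldsymbol{\mathrm{X}}),\boldsymbol{\mathrm{I}}_{d}\bigr)=|\boldsymbol{\mathrm{s}}|\,d_{\mathrm{BW}}(\boldsymbol{\mathrm{X}},\boldsymbol{\mathrm{I}}_{d}),
\]
for each $\boldsymbol{\mathrm{X}}\in\boldsymbol{\mathcal{S}}_{++}^{d}$; multiplying both sides by $\omega_{i}$ and summing over $i$ then yields the proposition (the statement's $\boldsymbol{\mathrm{s}}$ on the right is read with the implicit sign convention $\boldsymbol{\mathrm{s}}>0$). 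The whole proof therefore collapses to one SPD calculation, run in the eigenbasis of $\boldsymbol{\mathrm{X}}$.

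First I would confirm the closed form of $\boldsymbol{\mathrm{\psi}}_{\boldsymbol{\mathrm{s}}}$ from \cref{opera}. Since $\mathcal{L}_{\boldsymbol{\mathrm{I}}_{d}}(\boldsymbol{\mathrm{S}})=\boldsymbol{\mathrm{S}}/2$, the exponential and logarithm at the identity reduce to $\mathrm{Exp}_{\boldsymbol{\mathrm{I}}_{d}}(\boldsymbol{\mathrm{S}})=(\boldsymbol{\mathrm{I}}_{d}+\boldsymbol{\mathrm{S}}/2)^{2}$ and $\mathrm{Log}_{\boldsymbol{\mathrm{I}}_{d}}(\boldsymbol{\mathrm{X}})=2(\boldsymbol{\mathrm{X}}^{1/2}-\boldsymbol{\mathrm{I}}_{d})$, and composition recovers the stated expression. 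Next I would joint-diagonalize: writing $\boldsymbol{\mathrm{X}}=\boldsymbol{\mathrm{U}}\mathrm{diag}(\lambda_{k})\boldsymbol{\mathrm{U}}^{T}$, both $\boldsymbol{\mathrm{\psi}}_{\boldsymbol{\mathrm{s}}}(\boldsymbol{\mathrm{X}})$ and its principal square root are diagonal in this basis, with eigenvalues $(\boldsymbol{\mathrm{s}}\sqrt{\lambda_{k}}+(1-\boldsymbol{\mathrm{s}}))^{2}$ and $\boldsymbol{\mathrm{s}}\sqrt{\lambda_{k}}+(1-\boldsymbol{\mathrm{s}})$, respectively.

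Then I would substitute into the BW distance specialised at $\boldsymbol{\mathrm{I}}_{d}$, namely $d_{\mathrm{BW}}^{2}(\boldsymbol{\mathrm{A}},\boldsymbol{\mathrm{I}}_{d})=\mathrm{tr}(\boldsymbol{\mathrm{A}})+d-2\,\mathrm{tr}(\boldsymbol{\mathrm{A}}^{1/2})$, and telescope the resulting eigenvalue sum into a perfect square:
\[
d_{\mathrm{BW}}^{2}(\boldsymbol{\mathrm{\psi}}_{\boldsymbol{\mathrm{s}}}(\boldsymbol{\mathrm{X}}),\boldsymbol{\mathrm{I}}_{d})=\sum_{k}\bigl[(\boldsymbol{\mathrm{s}}\sqrt{\lambda_{k}}+(1-\boldsymbol{\mathrm{s}}))-1\bigr]^{2}=\boldsymbol{\mathrm{s}}^{2}\sum_{k}(\sqrt{\lambda_{k}}-1)^{2}=\boldsymbol{\mathrm{s}}^{2}\,d_{\mathrm{BW}}^{2}(\boldsymbol{\mathrm{X}},\boldsymbol{\mathrm{I}}_{d}),
\]
where the last equality re-applies the same specialised distance formula to $\boldsymbol{\mathrm{X}}$ itself. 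Taking positive square roots gives the pointwise identity, and the weighted sum closes the proof.

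The main obstacle is the branch choice for the principal square root: identifying $\boldsymbol{\mathrm{\psi}}_{\boldsymbol{\mathrm{s}}}(\boldsymbol{\mathrm{X}})^{1/2}$ with $\boldsymbol{\mathrm{s}}\sqrt{\lambda_{k}}+(1-\boldsymbol{\mathrm{s}})$ rather than $|\boldsymbol{\mathrm{s}}\sqrt{\lambda_{k}}+(1-\boldsymbol{\mathrm{s}})|$ requires $\boldsymbol{\mathrm{s}}\sqrt{\lambda_{k}}+(1-\boldsymbol{\mathrm{s}})>0$ for every eigenvalue of $\boldsymbol{\mathrm{X}}$. This is precisely the condition ensuring $\boldsymbol{\mathrm{\psi}}_{\boldsymbol{\mathrm{s}}}(\boldsymbol{\mathrm{X}})$ lies in the image of the BW exponential at $\boldsymbol{\mathrm{I}}_{d}$, i.e., the geodesic-incompleteness caveat flagged right after \cref{opera}. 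I would record this as a standing hypothesis (together with $\boldsymbol{\mathrm{s}}>0$, to match the unsigned right-hand side of \cref{eq:control_var}) before running the calculation; with that caveat the argument is a one-line eigenvalue telescoping.
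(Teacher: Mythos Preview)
Your proof is correct and follows essentially the same route as the paper: both derive the simplified forms $\mathrm{Exp}_{\boldsymbol{\mathrm{I}}_{d}}(\boldsymbol{\mathrm{S}})=(\boldsymbol{\mathrm{I}}_{d}+\boldsymbol{\mathrm{S}}/2)^{2}$ and $\mathrm{Log}_{\boldsymbol{\mathrm{I}}_{d}}(\boldsymbol{\mathrm{X}})=2(\boldsymbol{\mathrm{X}}^{1/2}-\boldsymbol{\mathrm{I}}_{d})$, then use $d_{\mathrm{BW}}^{2}(\boldsymbol{\mathrm{A}},\boldsymbol{\mathrm{I}}_{d})=\mathrm{tr}\bigl((\boldsymbol{\mathrm{A}}^{1/2}-\boldsymbol{\mathrm{I}}_{d})^{2}\bigr)$ to reduce the claim to $\boldsymbol{\mathrm{\psi}}_{\boldsymbol{\mathrm{s}}}(\boldsymbol{\mathrm{X}})^{1/2}-\boldsymbol{\mathrm{I}}_{d}=\boldsymbol{\mathrm{s}}(\boldsymbol{\mathrm{X}}^{1/2}-\boldsymbol{\mathrm{I}}_{d})$. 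The only cosmetic difference is that you pass to the eigenbasis while the paper stays in trace form, and you are more explicit about the branch/sign caveat that the paper leaves implicit.
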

\begin{proof}
    The proof is presented in \cref{app:proof-control_var}.
\end{proof}
Therefore, data scaling can be achieved by scaling data on the tangent space at the Riemannian mean.

\subsection{RBN based on the GBWM}

We generalize the BWM-based RBN and introduce a learnable version that implements RBN under the GBWM.

\subsubsection{Power-deformed GBWM}

Previous works have extended the AIM, LEM, LCM, and BWM to power-deformed metrics \cite{thanwerdas2019affine,thanwerdas2022geometry,chen2024liebn} using the diffeomorphism matrix power transformation $\phi_\theta(\boldsymbol{\mathrm{X}})=\boldsymbol{\mathrm{X}}^\theta, \forall \boldsymbol{\mathrm{X}} \in \boldsymbol{{\mathcal{S}}_{++}^d}$.
Intuitively, the matrix power parameter $\theta$ allows for interpolation between the original metric ($\theta = 1$) and a LEM-like metric ($\theta \rightarrow 0$) \cite{thanwerdas2022geometry}.
Inspired by the flexibility of the power function, we suggest the power-deformed GBWM, denoted as $(\theta)$-GBWM. For any $ \boldsymbol{\mathrm{X}} \in \boldsymbol{{\mathcal{S}}_{++}^d}$ and $\boldsymbol{\mathrm{S}}_{1}, \boldsymbol{\mathrm{S}}_{2} \in {T}_{\boldsymbol{\mathrm{X}}}\boldsymbol{{\mathcal{S}}_{++}^d},$ it can be formulated as: 
    \begin{equation}
    \begin{split}
    \label{eq:pow-metric}
        g^{(\theta)-\mathrm{GBW}}_{\boldsymbol{\mathrm{X}}} &\left(\boldsymbol{\mathrm{S}}_{1},\boldsymbol{\mathrm{S}}_{2} \right) = \\ 
        &\frac{1}{\theta^2} g_{\boldsymbol{\mathrm{X}}^\theta}^{\mathrm{GBW}} \left( \left(\phi_\theta\right)_{*,\boldsymbol{\mathrm{X}}} \left(\boldsymbol{\mathrm{S}}_{1}\right), \left(\phi_\theta\right)_{*,\boldsymbol{\mathrm{X}}} \left(\boldsymbol{\mathrm{S}}_{2}\right)\right),
    \end{split}
    \end{equation}
where $\phi_\theta$ signifies the matrix power, and $(\phi_\theta)_{*,\boldsymbol{\mathrm{X}}}(\cdot)$ denotes the differential map of $\phi_\theta$ at $\boldsymbol{\mathrm{X}}$.

\begin{proposition}[Deformation]
    \label{props:defromed_0}
    When $\theta \to 0$, $\forall$ $\boldsymbol{\mathrm{X}} \in {{\mathcal{S}}_{++}^d}$ and $\forall$ $\boldsymbol{\mathrm{S}} \in {T}_{\boldsymbol{\mathrm{X}}}{\boldsymbol{{\mathcal{S}}_{++}^d}}$, the following can be derived:
    {\small
    \begin{equation}
    \label{diformed_metrics_lim}
    g^{(\theta)-\mathrm{GBW}}_{\boldsymbol{\mathrm{X}}} \left(\boldsymbol{\mathrm{S}},\boldsymbol{\mathrm{S}}\right) \overset{\theta \to 0}{\longrightarrow}     
    \frac{1}{2} \left \langle \log_{*,\boldsymbol{\mathrm{X}}}\left(\mathcal{L}_{\boldsymbol{\mathrm{M}}}\left(\boldsymbol{\mathrm{S}}\right)\right),\log_{*,\boldsymbol{\mathrm{X}}}\left(\boldsymbol{\mathrm{S}}\right) \right \rangle, 
    \end{equation}
    }
    where the right side can be viewed as a variant of LEM.
\end{proposition}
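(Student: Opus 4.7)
The plan is to Taylor expand $(\phi_\theta)_{*,\boldsymbol{\mathrm{X}}}$ about $\theta = 0$ and let the $1/\theta^2$ prefactor in \cref{eq:pow-metric} absorb the two resulting powers of $\theta$ produced by that expansion, so that the only surviving object is a quadratic form in $\log_{*,\boldsymbol{\mathrm{X}}}(\boldsymbol{\mathrm{S}})$ governed by the limiting Lyapunov operator.

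First, I would use the factorization $\phi_\theta = \exp \circ (\theta \log)$ and the chain rule to write $(\phi_\theta)_{*,\boldsymbol{\mathrm{X}}}(\boldsymbol{\mathrm{S}}) = \exp_{*,\theta \log \boldsymbol{\mathrm{X}}}\bigl(\theta \log_{*,\boldsymbol{\mathrm{X}}}(\boldsymbol{\mathrm{S}})\bigr)$. Since $\theta \log \boldsymbol{\mathrm{X}} \to 0$ as $\theta \to 0$ and $\exp_{*,0} = \mathrm{id}$ on $\boldsymbol{{\mathcal{S}}^d}$, this yields the expansion $(\phi_\theta)_{*,\boldsymbol{\mathrm{X}}}(\boldsymbol{\mathrm{S}}) = \theta \log_{*,\boldsymbol{\mathrm{X}}}(\boldsymbol{\mathrm{S}}) + O(\theta^2)$.

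Next, I would substitute this expansion into the trace form of \cref{eq:metric_gbwm} taken at the base point $\boldsymbol{\mathrm{X}}^\theta$. Linearity of the generalized Lyapunov operator $\mathcal{L}_{\boldsymbol{\mathrm{X}}^\theta,\boldsymbol{\mathrm{M}}}$ in its argument lets me pull the leading factor of $\theta$ out; combined with the second factor of $\theta$ from the other slot of the bilinear pairing, this produces $\theta^2$, which cancels the $1/\theta^2$ prefactor. Meanwhile $\boldsymbol{\mathrm{X}}^\theta \to \boldsymbol{\mathrm{I}}_d$, so the defining equation $\boldsymbol{\mathrm{X}}^\theta \mathcal{L}\boldsymbol{\mathrm{M}} + \boldsymbol{\mathrm{M}}\mathcal{L}\boldsymbol{\mathrm{X}}^\theta = \boldsymbol{\mathrm{S}}$ collapses to the standard Lyapunov equation $\mathcal{L}\boldsymbol{\mathrm{M}} + \boldsymbol{\mathrm{M}}\mathcal{L} = \boldsymbol{\mathrm{S}}$, that is, $\mathcal{L}_{\boldsymbol{\mathrm{M}}}$. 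The limit is therefore a quadratic form in $\log_{*,\boldsymbol{\mathrm{X}}}(\boldsymbol{\mathrm{S}})$ governed by $\mathcal{L}_{\boldsymbol{\mathrm{M}}}$; this is LEM-like because the dependence on $\boldsymbol{\mathrm{X}}$ enters only through $\log_{*,\boldsymbol{\mathrm{X}}}$, and it matches the form on the right-hand side of \cref{diformed_metrics_lim} after rearranging via the self-adjointness of $\mathcal{L}_{\boldsymbol{\mathrm{M}}}$ (and of $\log_{*,\boldsymbol{\mathrm{X}}}$) with respect to the Frobenius inner product on $\boldsymbol{{\mathcal{S}}^d}$.

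The hard part will be justifying the continuity $\mathcal{L}_{\boldsymbol{\mathrm{X}}^\theta,\boldsymbol{\mathrm{M}}} \to \mathcal{L}_{\boldsymbol{\mathrm{I}}_d, \boldsymbol{\mathrm{M}}}$ as $\theta \to 0$, since this operator is defined only implicitly. The argument reduces to the nonsingularity of the Kronecker sum $\boldsymbol{\mathrm{A}} \otimes \boldsymbol{\mathrm{M}} + \boldsymbol{\mathrm{M}} \otimes \boldsymbol{\mathrm{A}}$ on a neighborhood of $\boldsymbol{\mathrm{A}} = \boldsymbol{\mathrm{I}}_d$ (which holds because $\boldsymbol{\mathrm{I}}_d$ and $\boldsymbol{\mathrm{M}}$ are positive definite and hence share no pair of opposite eigenvalues), together with smooth dependence of the resulting inverse on $\boldsymbol{\mathrm{A}}$. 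A secondary bookkeeping point is verifying that the $O(\theta^2)$ remainder in the expansion of $(\phi_\theta)_{*,\boldsymbol{\mathrm{X}}}$ produces at most an $O(\theta)$ contribution to the metric after the division by $\theta^2$, so that it cleanly drops out in the limit.
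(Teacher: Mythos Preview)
Your approach is correct and is essentially the same as the paper's: the paper simply cites \cite{chen2024liebn} for the limit $g^{(\theta)\text{-GBW}}_{\boldsymbol{\mathrm{X}}}(\boldsymbol{\mathrm{S}},\boldsymbol{\mathrm{S}}) \to g^{\mathrm{GBW}}_{\boldsymbol{\mathrm{I}}_d}(\log_{*,\boldsymbol{\mathrm{X}}}(\boldsymbol{\mathrm{S}}),\log_{*,\boldsymbol{\mathrm{X}}}(\boldsymbol{\mathrm{S}}))$ and then observes $\mathcal{L}_{\boldsymbol{\mathrm{I}}_d,\boldsymbol{\mathrm{M}}}=\mathcal{L}_{\boldsymbol{\mathrm{M}}}$, whereas you supply the Taylor-expansion details that underlie that cited limit. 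One caveat: your final ``rearranging via self-adjointness'' does not actually turn $\langle \mathcal{L}_{\boldsymbol{\mathrm{M}}}(\log_{*,\boldsymbol{\mathrm{X}}}\boldsymbol{\mathrm{S}}),\log_{*,\boldsymbol{\mathrm{X}}}\boldsymbol{\mathrm{S}}\rangle$ into $\langle \log_{*,\boldsymbol{\mathrm{X}}}(\mathcal{L}_{\boldsymbol{\mathrm{M}}}\boldsymbol{\mathrm{S}}),\log_{*,\boldsymbol{\mathrm{X}}}\boldsymbol{\mathrm{S}}\rangle$, since $\mathcal{L}_{\boldsymbol{\mathrm{M}}}$ and $\log_{*,\boldsymbol{\mathrm{X}}}$ need not commute---but the paper's own proof also lands on the former expression and then writes the latter without justification, so this appears to be an ordering slip in the stated formula rather than a flaw in your argument.
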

\begin{proof}
    The proof is presented in \cref{app:proof-deformed_0}.
\end{proof}

The method studied in~\cite{thanwerdas2022geometry} extended AIM into power-deformed metric, denoted as $(\theta)$-AIM. 
We find that $(\theta)$-GBWM is locally power-deformed $(\theta)$-AIM, which is detailed in the following proposition.

\begin{proposition}[Locally Deformed AIM]
\label{props:local_deformed}
    For any $\boldsymbol{\mathrm{X}} \in {{\mathcal{S}}_{++}^d}$ and $\boldsymbol{\mathrm{S}}_{1}, \boldsymbol{\mathrm{S}}_{2} \in {T}_{\boldsymbol{\mathrm{X}}}{\boldsymbol{{\mathcal{S}}_{++}^d}}$, we have the following:
    \begin{equation}
    \label{eq:local_deform}
    g^{(\theta)-\mathrm{GBW}}_{\boldsymbol{\mathrm{X}}} \left(\boldsymbol{\mathrm{S}}_{1},\boldsymbol{\mathrm{S}}_{2}\right)=\frac{1}{4}g^{(\theta)-\mathrm{AI}}_{\boldsymbol{\mathrm{X}}}\left(\boldsymbol{\mathrm{S}}_{1},\boldsymbol{\mathrm{S}}_{2}\right).
    \end{equation}
\end{proposition}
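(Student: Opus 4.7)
The plan is to unfold the definition of the power-deformed GBW metric in \cref{eq:pow-metric}, substitute the explicit vectorised form of the GBW inner product given in \cref{eq:metric_gbwm}, and specialise the SPD parameter $\boldsymbol{\mathrm{M}}$ to $\boldsymbol{\mathrm{X}}^{\theta}$. This specialisation is the power-deformed analogue of the choice $\boldsymbol{\mathrm{M}}=\boldsymbol{\mathrm{X}}$ that yields the local AIM coincidence already noted immediately after \cref{eq:metric_gbwm}, so the \emph{locally} qualifier in the statement should be read as the pointwise identity at $\boldsymbol{\mathrm{X}}$ under this instantiation of the GBWM parameter at the basepoint where the underlying inner product is evaluated (namely $\boldsymbol{\mathrm{X}}^{\theta}$).

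Concretely, I would first rewrite
\begin{equation*}
g^{(\theta)-\mathrm{GBW}}_{\boldsymbol{\mathrm{X}}}(\boldsymbol{\mathrm{S}}_{1},\boldsymbol{\mathrm{S}}_{2}) = \tfrac{1}{\theta^{2}}\, g^{\mathrm{GBW}}_{\boldsymbol{\mathrm{X}}^{\theta}}\!\bigl((\phi_\theta)_{*,\boldsymbol{\mathrm{X}}}\boldsymbol{\mathrm{S}}_{1},(\phi_\theta)_{*,\boldsymbol{\mathrm{X}}}\boldsymbol{\mathrm{S}}_{2}\bigr)
\end{equation*}
directly from \cref{eq:pow-metric}. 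Next, I would apply \cref{eq:metric_gbwm} with $\boldsymbol{\mathrm{M}}=\boldsymbol{\mathrm{X}}^{\theta}$: the Kronecker sum $\boldsymbol{\mathrm{X}}^{\theta}\!\otimes\!\boldsymbol{\mathrm{M}}+\boldsymbol{\mathrm{M}}\!\otimes\!\boldsymbol{\mathrm{X}}^{\theta}$ collapses to $2(\boldsymbol{\mathrm{X}}^{\theta}\!\otimes\!\boldsymbol{\mathrm{X}}^{\theta})$, its inverse gains the extra $\tfrac{1}{2}$, and a direct comparison with \cref{eq:metric_aim} gives
\begin{equation*}
g^{\mathrm{GBW}}_{\boldsymbol{\mathrm{X}}^{\theta}}(\boldsymbol{\mathrm{T}}_{1},\boldsymbol{\mathrm{T}}_{2}) = \tfrac{1}{4}\, g^{\mathrm{AI}}_{\boldsymbol{\mathrm{X}}^{\theta}}(\boldsymbol{\mathrm{T}}_{1},\boldsymbol{\mathrm{T}}_{2})
\end{equation*}
for arbitrary tangent vectors $\boldsymbol{\mathrm{T}}_{1},\boldsymbol{\mathrm{T}}_{2}$. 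Finally, I would recognise the resulting expression as $\tfrac{1}{4}$ times the $(\theta)$-AIM of \cite{thanwerdas2022geometry}, which is defined by the very same pullback recipe $\tfrac{1}{\theta^{2}}(\phi_\theta^{*}g^{\mathrm{AI}})$, so the common factor $\tfrac{1}{\theta^{2}}$ and the common pushforward $(\phi_\theta)_{*,\boldsymbol{\mathrm{X}}}$ absorb back cleanly to deliver \cref{eq:local_deform}.

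The main obstacle is essentially conceptual rather than computational: the displayed equation is written without an explicit reference to $\boldsymbol{\mathrm{M}}$, so the proof must make clear that the equality holds \emph{at the point} $\boldsymbol{\mathrm{X}}$ under the local specialisation $\boldsymbol{\mathrm{M}}=\boldsymbol{\mathrm{X}}^{\theta}$ (the direct power-deformed counterpart of the undeformed statement ``$\boldsymbol{\mathrm{M}}=\boldsymbol{\mathrm{X}}$ yields local AIM''). Once that interpretation is fixed, the algebra reduces to a one-line Kronecker-sum collapse followed by matching two pullback-defined power metrics, and no new estimates or limits are required.
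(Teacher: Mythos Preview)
Your proposal is correct and follows essentially the same route as the paper's own proof: unfold \cref{eq:pow-metric}, specialise $\boldsymbol{\mathrm{M}}=\boldsymbol{\mathrm{X}}^{\theta}$ in \cref{eq:metric_gbwm} so that the Kronecker sum collapses to $2(\boldsymbol{\mathrm{X}}^{\theta}\otimes\boldsymbol{\mathrm{X}}^{\theta})$, identify the resulting expression with $\tfrac14$ of \cref{eq:metric_aim} at $\boldsymbol{\mathrm{X}}^{\theta}$, and then rewrap via the $\phi_\theta$-pullback to obtain $(\theta)$-AIM. Your explicit clarification that ``locally'' means the pointwise instantiation $\boldsymbol{\mathrm{M}}=\boldsymbol{\mathrm{X}}^{\theta}$ is exactly the implicit convention the paper uses without comment.
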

\begin{proof}
    The proof is presented in \cref{app:proof-local_deform}.
\end{proof}
\cref{props:defromed_0,props:local_deformed} reveal that the power-deformed GBWM is a locally deformed AIM and LEM-like metric $(\theta \to 0)$. Therefore, the power-deformed GBWM-based RBN may achieve better results than the GBWM-based one. This is further demonstrated in the following experiments.

\begin{algorithm}[t]
  \footnotesize
  \SetAlgoNlRelativeSize{-1}
  \SetAlgoLined
  \setlength{\textfloatsep}{0.1cm}
  \setlength{\floatsep}{0.1cm}
  \setlength{\intextsep}{0.1cm}
  \SetInd{0.3em}{0.5em}
  \setlength{\parskip}{0pt}
  \caption{Batch normalization on the SPD manifold under deformed GBWM}
  \label{alg:bn}
  \SetKwInOut{Input}{Input}
  \SetKwInOut{Output}{Output}

  \Input{batches of $N$ SPD matrices $\{{\boldsymbol{\mathrm{X}}_{i}}\}_{i=1}^{N}$, small positive constant $\boldsymbol{\mathrm{\epsilon}}$, running mean $\boldsymbol{\mathcal{B}}_{r}=\boldsymbol{\mathrm{I}}_d$, running variance $\boldsymbol{\mathrm{\nu}}_{r}^{2}=1$, power parameter $\theta \in \mathbb{R}\backslash\{0\}$, momentum $\omega$, a learnable parameter $\boldsymbol{\mathrm{M}}\in \boldsymbol{{\mathcal{S}}_{++}^d}$, bias $\boldsymbol{\mathcal{G}}\in \boldsymbol{{\mathcal{S}}_{++}^d}$, and scaling parameter $\boldsymbol{\mathrm{s}}\in \mathbb{R}\setminus \{0\}$.}
  \Output{normalized batch $\{\boldsymbol{\mathrm{\Acute{X}}}_i\}_{i=1}^{N}$.}

  ${\boldsymbol{\mathrm{\hat{X}}}}_i = \boldsymbol{\mathrm{M}}^{-\frac{1}{2}}\boldsymbol{\mathrm{X}}_{i}^{\theta}\boldsymbol{\mathrm{M}}^{-\frac{1}{2}}$, $\;{\boldsymbol{\mathcal{\hat{G}}}}_i = \boldsymbol{\mathrm{M}}^{-\frac{1}{2}}\boldsymbol{\mathcal{G}}^{\theta}\boldsymbol{\mathrm{M}}^{-\frac{1}{2}}$ $\;\;\;\;\;$

  \BlankLine
  \If{\textit{training}}{
      $\boldsymbol{\mathcal{B}}_{b} \gets \mathbb{E}_{N}(\{\boldsymbol{\mathrm{\hat{X}}}_i\}_{i=1}^{N}),\boldsymbol{\mathrm{\nu}}_{b}^{2} \gets \frac{1}{N}\sum_{i=1}^{N}\frac{1}{\theta^{2}} {d_{\mathrm{BW}}^{2}}(\boldsymbol{\mathcal{B}}_{b}, \boldsymbol{\mathrm{\hat{X}}}_{i})$;  

      $\boldsymbol{\mathcal{B}}_{r}\gets \mathbb{E}_{2}^{\Omega}(\boldsymbol{\mathcal{B}}_{r},\boldsymbol{\mathcal{B}}_{b}), \boldsymbol{\mathrm{\nu}}_{r}^{2} \gets (1-\omega)\boldsymbol{\mathrm{\nu}}_{r}^{2} + \omega \boldsymbol{\mathrm{\nu}}_{b}^{2} $;  
      \\
  } 

  \textbf{if} {\textit{training}} \textbf{then} $\boldsymbol{\mathcal{B}} \gets \boldsymbol{\mathcal{B}}_{b},\boldsymbol{\mathrm{\nu}}^{2} \gets \boldsymbol{\mathrm{\nu}}_{b}^{2}$
  
  \textbf{else} $\boldsymbol{\mathcal{B}} \gets \boldsymbol{\mathcal{B}}_{r},\boldsymbol{\mathrm{\nu}}^{2} \gets \boldsymbol{\mathrm{\nu}}_{r}^{2}$ 

  \For{$i \le N$} {
      $\bar{\boldsymbol{\mathrm{X}}}_{i} \gets \Gamma_{\boldsymbol{\mathcal{B}}\to \boldsymbol{\mathrm{I}}_d}(\boldsymbol{\mathrm{\hat{X}}}_i)$; 
      
      $\check{\boldsymbol{\mathrm{X}}}_{i} \gets \mathrm{Exp}_{\boldsymbol{\mathrm{I}}_{d}}\left(\frac{\boldsymbol{\mathrm{s}}}{\sqrt{{\boldsymbol{\mathrm{\nu}}}^{2}+{\boldsymbol{\mathrm{\epsilon}}}}}\left(\mathrm{Log}_{\boldsymbol{\mathrm{I}}_{d}}\left(\bar{\boldsymbol{\mathrm{X}}}_{i}\right)\right)\right)$;
      
      $\Tilde{\boldsymbol{\mathrm{X}}}_i \gets \Gamma_{\boldsymbol{\mathrm{I}}_d\to \boldsymbol{\mathcal{\hat{G}}}}(\check{\boldsymbol{\mathrm{X}}}_{i})$.$\;\;\;\;\;$}
  $\boldsymbol{\mathrm{\Acute{X}}}_i = (\boldsymbol{\mathrm{M}}^{\frac{1}{2}}\Tilde{\boldsymbol{\mathrm{X}}}_i\boldsymbol{\mathrm{M}}^{\frac{1}{2}})^{\frac{1}{\theta}}$ $\;\;\;\;\;$
\end{algorithm}

\subsubsection{RBN under power-deformed GBWM}
The computation of RBN under the power-deformed GBWM is similar to that under BWM, as shown below:
\begin{align}
    \text{Centering:\;} \bar{\boldsymbol{\mathrm{X}}}_{i}&=\mathrm{\widetilde{Exp}}_{\boldsymbol{\mathrm{I}}_{d}}\left(\widetilde{\Gamma}_{\boldsymbol{\mathcal{B}\to \boldsymbol{\mathrm{I}}_{d}}}\left(\mathrm{\widetilde{Log}}_{\boldsymbol{\mathcal{B}}}\left(\boldsymbol{\mathrm{X}}_{i}\right)\right)\right), \label{eq:gbw_center} \\
    \text{Scaling: } \check{\boldsymbol{\mathrm{X}}}_{i}&=\mathrm{\widetilde{Exp}}_{\boldsymbol{\mathrm{I}}_{d}}(\frac{\boldsymbol{\mathrm{s}}}{\sqrt{{\boldsymbol{\mathrm{\nu}}}^{2}+{\boldsymbol{\mathrm{\epsilon}}}}}(\mathrm{\widetilde{Log}}_{\boldsymbol{\mathrm{I}}_{d}}(\bar{\boldsymbol{\mathrm{X}}}_{i}))), \label{eq:gbw_density} \\
    \text{Biasing: } \Tilde{\boldsymbol{\mathrm{X}}}_{i}&=\mathrm{\widetilde{Exp}}_{\boldsymbol{\mathcal{G}}}\left(\widetilde{\Gamma}_{\boldsymbol{\mathrm{I}}_{d}\to \boldsymbol{\mathcal{G}}}\left(\mathrm{\widetilde{Log}}_{\boldsymbol{\mathrm{I}}_{d}}(\check{\boldsymbol{\mathrm{X}}}_{i})\right)\right), \label{eq:gbw_bias}
\end{align}
where $\mathrm{\widetilde{Exp}}, \mathrm{\widetilde{Log}}$ and $\mathrm{\widetilde{\Gamma}}$ represent the Riemannian exponential, logarithm, and parallel transportation defined under the power-deformed GBWM, respectively.
To streamline the process, we find that the GBWBN can be derived directly from the RBN under the standard BWM. As shown in \cite{han2021generalized}, for any  $\boldsymbol{\mathrm{X}}\in\boldsymbol{\mathcal{S}}_{++}^d$ and $\boldsymbol{\mathrm{M}}\in\boldsymbol{{\mathcal{S}}_{++}^d}$, the map $\varphi: (\boldsymbol{{\mathcal{S}}_{++}^d},{g}^{\mathrm{BW}}) \to (\boldsymbol{{\mathcal{S}}_{++}^d},{g}^{\mathrm{GBW}})$, denoted as $\varphi(\boldsymbol{\mathrm{X}})=\boldsymbol{\mathrm{M}}^{\frac{1}{2}}\boldsymbol{\mathrm{X}}\boldsymbol{\mathrm{M}}^{\frac{1}{2}}$, is a Riemannian isometry. 
Therefore, a diffeomorphism, $f = \boldsymbol{\mathrm{\phi}}_{\theta}\circ \varphi$, can be defined, which is a Riemannian isometry from BWM to $\theta^2 g$ with $g$ as $(\theta)$-GBWM. With the BWM-based RBN defined below: 
\begin{equation}
\label{bwbn}
    \text{BWBN}(\boldsymbol{\mathrm{X}}_{i},\boldsymbol{\mathcal{G}},\omega,\boldsymbol{\mathrm{\epsilon}},\boldsymbol{\mathrm{s}}), \forall \boldsymbol{\mathrm{X}}_{i}\in\{\boldsymbol{\mathrm{X}}_{i\cdots N} \in \boldsymbol{{\mathcal{S}}_{++}^d}\},
\end{equation}
the following theorem can be derived.
\begin{theorem}
\label{power-gbwbn}
    Given an SPD manifold $\boldsymbol{{\mathcal{S}}_{++}^d}$, a diffeomorphism $\textit{f}=\boldsymbol{\mathrm{\phi}}_{\theta}\circ \varphi: (\boldsymbol{{\mathcal{S}}_{++}^d}, g^{BW}) \to (\boldsymbol{{\mathcal{S}}_{++}^d}, g^{\theta\text{-GBW}})$, and a batch of SPD matrices $\{{\boldsymbol{\mathrm{X}}_{i}}\}_{i=1}^{N}$, GBWBN can be achieved via the following steps:
    \begin{itemize}
    \item \textbf{Mapping data into } $g^{BW} \mathrm{:}$
    \begin{equation}
    {\boldsymbol{\mathrm{\hat{X}}}}_i = f^{-1}({\boldsymbol{\mathrm{X}}}_i), \quad {\boldsymbol{\mathcal{\hat{G}}}} = f^{-1}(\boldsymbol{\mathcal{G}}).
    \label{map1}
    \end{equation}
    
    \item \textbf{Computing BWBN in } $(\boldsymbol{{\mathcal{S}}_{++}^d }, g^{BW}) \mathrm{:}$
    \begin{equation}
    \Tilde{\boldsymbol{\mathrm{X}}}_i= \text{BWBN}({\boldsymbol{\mathrm{\hat{X}}}}_i,{\boldsymbol{\mathcal{\hat{G}}}},\omega,\boldsymbol{\mathrm{\epsilon}},\boldsymbol{\mathrm{s}}).
    \label{gbwbn_1}
    \end{equation}
    
    \item \textbf{Mapping the normalized data back to } $g^{\theta\text{-GBW}} \mathrm{:}$
    \begin{equation}
    \boldsymbol{\mathrm{\Acute{X}}}_i = f(\Tilde{\boldsymbol{\mathrm{X}}}_i).
    \label{map2}
    \end{equation}
    \end{itemize}
\end{theorem}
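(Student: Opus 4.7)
The plan is to reduce the theorem to the fact that $f=\phi_\theta\circ\varphi$ is a Riemannian isometry, and then invoke the naturality of every operation appearing in the batch normalization pipeline. Concretely, I would first record that $\varphi(\bsym{X})=\bsym{M}^{1/2}\bsym{X}\bsym{M}^{1/2}$ is a Riemannian isometry from $(\spd{d},g^{\mathrm{BW}})$ to $(\spd{d},g^{\mathrm{GBW}})$ (cited from \cite{han2021generalized}), and that $\phi_\theta$ is by construction an isometry between $(\spd{d},\theta^{2} g^{(\theta)\text{-}\mathrm{GBW}})$ and $(\spd{d},g^{\mathrm{GBW}})$, directly from the definition in \cref{eq:pow-metric}. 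Composing yields that $f:(\spd{d},g^{\mathrm{BW}})\to(\spd{d},\theta^{2}g^{(\theta)\text{-}\mathrm{GBW}})$ is an isometry, so its inverse $f^{-1}$ pulls $(\theta)$-GBW operations back to BW operations (modulo a fixed scale on the inner product).

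Second, I would translate this isometry property into the identities that every ingredient of the normalization uses. For any $\bsym{X},\bsym{Y}\in\spd{d}$ and $\bsym{S}\in T_{\bsym{X}}\spd{d}$, an isometry commutes with the Riemannian exponential, logarithm, and parallel transport: $\rieexp^{(\theta)\text{-}\mathrm{GBW}}_{f(\bsym{X})}(f_{*,\bsym{X}}\bsym{S}) = f\bigl(\rieexp^{\mathrm{BW}}_{\bsym{X}}\bsym{S}\bigr)$, $\rielog^{(\theta)\text{-}\mathrm{GBW}}_{f(\bsym{X})}f(\bsym{Y})=f_{*,\bsym{X}}\bigl(\rielog^{\mathrm{BW}}_{\bsym{X}}\bsym{Y}\bigr)$, and an analogous relation for $\Gamma$. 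Moreover, the weighted Fr\'echet mean is preserved because $d_{(\theta)\text{-}\mathrm{GBW}}(f(\bsym{X}),f(\bsym{Y}))=\theta\, d_{\mathrm{BW}}(\bsym{X},\bsym{Y})$ only rescales the objective in \cref{eq:mean} by $\theta^{2}$, leaving the minimizer unchanged, so $\mathbb{E}_{N}^{\boldsymbol{\Omega}}(\{f(\hat{\bsym{X}}_i)\})=f(\mathbb{E}_{N}^{\boldsymbol{\Omega}}(\{\hat{\bsym{X}}_i\}))$; the same $\theta^{2}$ factor is exactly what converts the BW Fr\'echet variance into the $(\theta)$-GBW variance, matching the $\tfrac{1}{\theta^{2}}$ correction in \cref{alg:bn}.

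Third, I would apply these identities step by step to the three phases of the power-deformed pipeline \eqref{eq:gbw_center}--\eqref{eq:gbw_bias}. For centering, by the $\rielog/\rieexp/\Gamma$ naturality, $\mathrm{\widetilde{Exp}}_{\bsym{I}_d}\bigl(\widetilde{\Gamma}_{\boldsymbol{\mathcal{B}}\to \bsym{I}_d}(\mathrm{\widetilde{Log}}_{\boldsymbol{\mathcal{B}}}\bsym{X}_i)\bigr)$ equals $f$ applied to the BW centering of $f^{-1}(\bsym{X}_i)$ with respect to the BW mean $f^{-1}(\boldsymbol{\mathcal{B}})$. Since the WFM commutes with $f$, that BW mean is precisely the batch mean $\boldsymbol{\mathcal{B}}_b$ computed from $\hat{\bsym{X}}_i=f^{-1}(\bsym{X}_i)$. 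Scaling is a tangent-space operation at $\bsym{I}_d$, so it transfers trivially once the variance is expressed in BW terms through the $\theta^{2}$ rescaling. Biasing is symmetric to centering, with $\boldsymbol{\mathcal{G}}$ replaced by $\hat{\boldsymbol{\mathcal{G}}}=f^{-1}(\boldsymbol{\mathcal{G}})$. Composing the three transfers shows that the output of \eqref{eq:gbw_bias} equals $f\bigl(\mathrm{BWBN}(\hat{\bsym{X}}_i,\hat{\boldsymbol{\mathcal{G}}},\omega,\bsym{\epsilon},\bsym{s})\bigr)$, which is precisely the statement of the theorem.

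The main obstacle I anticipate is bookkeeping around the conformal factor $\theta^{2}$ relating $g^{\mathrm{BW}}$ and $g^{(\theta)\text{-}\mathrm{GBW}}$: a constant rescaling of the metric does not affect geodesics, $\rieexp$, $\rielog$, parallel transport, or Fr\'echet minimizers, but it does rescale squared distances and hence the Fr\'echet variance by $\theta^{2}$. Getting the scaling step \eqref{eq:gbw_density} to line up therefore requires explicitly tracking this factor and matching it with the $\tfrac{1}{\theta^{2}}$ appearing in the variance line of \cref{alg:bn}; once this is done cleanly, the remaining steps follow from the naturality of Riemannian operations under isometries.
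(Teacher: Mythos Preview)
Your proposal is correct and follows essentially the same route as the paper's own proof: both arguments rest on the fact that $f=\phi_\theta\circ\varphi$ is a Riemannian isometry (up to the conformal factor $\theta^2$) and then invoke the naturality of $\rieexp$, $\rielog$, parallel transport, geodesic distance, and the weighted Fr\'echet mean under isometries to push each stage of \eqref{eq:gbw_center}--\eqref{eq:gbw_bias} through $f^{-1}$. Your write-up is in fact slightly more explicit than the paper's about tracking the $\theta^2$ factor in the variance and matching it with the $\tfrac{1}{\theta^2}$ correction in \cref{alg:bn}, which the paper handles only by the remark preceding the algorithm.
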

\begin{proof}
    The proof is presented in \cref{app:proof-power-gbwbn}.
\end{proof}

\cref{arch: GBWBN} is an overview of the proposed GBWBN algorithm. As shown in \cite{chen2024liebn}, RBN only differs in variance under $\frac{1}{\theta^{2}}g$ and $g$. When calculating the variance in the BWBN, we need to divide the variance by $\theta^{2}$. Now, we give the framework of $(\theta)$-GBWBN in \cref{alg:bn}. Please refer to \cref{train GBWBN} for detailed information about training the GBWBN module. 

\begin{figure}[!t]
 \centering
 \includegraphics[width=0.99\linewidth]{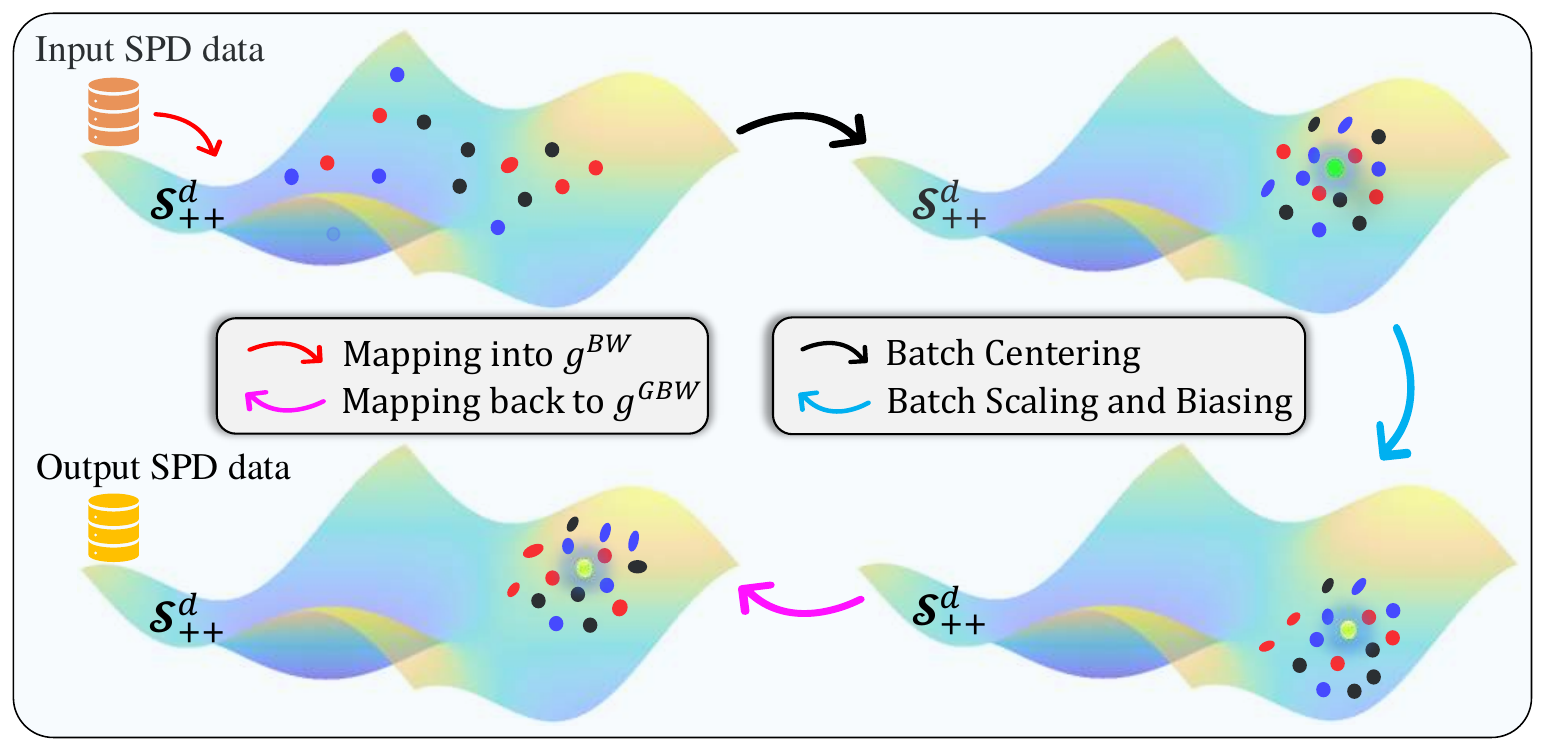}
 \caption{An overview of the computation process for GBWBN.}
 \label{arch: GBWBN}
\end{figure}

\section{Experiment}
\label{sec:experiments}
To validate the performance of the proposed GBWBN, we integrate it into two different SPD networks and test it on three distinct signal classification tasks: skeleton-based human action recognition using the HDM05 \cite{hdm05} and NTU RGB+D \cite{NtuRGB+D} datasets, and EEG signal classification employing the MAMEM-SSVEP-II dataset \cite{mamem}. 
In the experiments, we denote the network structure as $\{d_0,...,d_r,...,d_{\mathbb{L}}\}$, where $\mathbb{L}$ represents the number of BiMap layers in each SPD backbone network, and $d_{r-1}\times d_r$ signifies the size of the transformation matrix in the $r$-th BiMap layer. If the GBWBN layer does not reach saturation under the standard metric ($\theta=1$), we report the results for the GBWBN under the deformed metric. Detailed descriptions of the data preprocessing and implementation are provided in \cref{datasets detail}.

\subsection{Integration with SPDNet}
\begin{figure*}[!t]
 \centering
 \includegraphics[width=0.95\linewidth]{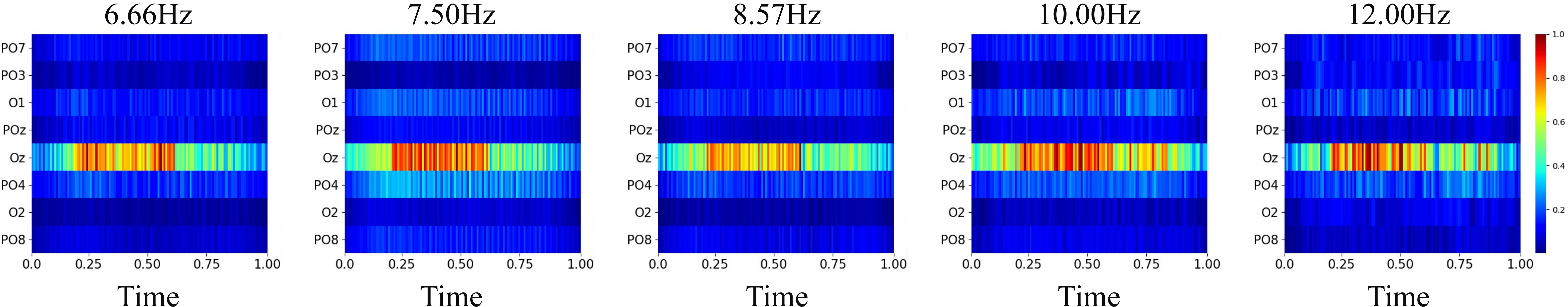}
 \caption{The heatmaps of the absolute gradient responses (computed as in \cite{matt}) for the S11 model across five frequency categories on the MAMEM-SSVEP-II dataset. In each heatmap, the x-axis represents time, while the y-axis signifies the EEG channels. 
 }
 \label{fig-heat}
\end{figure*}

\begin{figure*}[!t]
 \centering
 \includegraphics[width=0.95\linewidth]{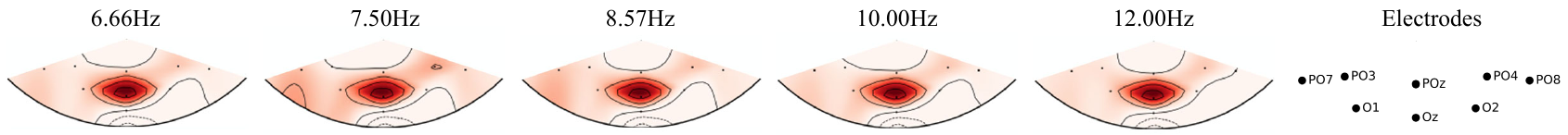}
 \caption{The spatial topomaps of the mean absolute gradient responses (computed as in \cite{matt}) across time for the S11 model on the MAMEM-SSVEP-II dataset. The brain region marked in dark red corresponds to channel Oz, which shows strong gradient activation across the visual cortex for all stimulation frequencies.
 }
 \label{fig-topo}
\end{figure*}

This subsection evaluates GBWBN under the widely adopted SPDNet backbone \cite{spdnet}, choosing the MAMEM-SSVEP-II dataset as an example. 
This dataset \cite{mamem} has 11 subjects and 5 sessions. 
Following the protocol of \citet{matt}, the first three sessions from each subject are used for training, with session 4 for validation and session 5 for testing. Moreover, our SPDNet-GBWBN is evaluated on this dataset under an architecture of \{15, 12\}, a learning rate of $2.5e^{-3}$, and a batch size of 30. 

For the evaluation, in addition to the aforementioned RBN methods, several Euclidean deep learning-based EEG models are also selected for comprehensive comparison, including EEGNet~\cite{matt17}, ShallowConvNet~\cite{matt18}, SCCNet~\cite{matt19}, EEG-TCNet \cite{matt52}, FBCNet \cite{matt53}, TCNet-Fusion \cite{matt51}, and MBEEGSE \cite{matt50}. The average classification results are reported in \cref{mamem_acc}. It can be seen that \textbf{our GBWBN layer improves the vanilla SPDNet by 3.17\%}. Besides, the accuracy of SPDNet-GBWBN is 2.71\%, 4.87\%, 3.39\%, and 5.32\% higher than that of SPDNet-BN (m), SPDNet-BN (m+v), SPDNet-Manifoldnorm(AIM), and SPDNet-LieBN(AIM), respectively. These results show the effectiveness of our GBWBN in parsing complex EEG signals.

\begin{table}[!t] 
    \renewcommand\arraystretch{0.9}
    \centering
    \begin{tabular}{lc}
    \toprule
    {Models} & Acc. (\%) \\
    \midrule
    EEGNet \cite{matt17} & 53.72 $\pm$ 7.23 \\
    ShallowConvNet \cite{matt18} & 56.93 $\pm$ 6.97 \\
    SCCNet \cite{matt19} & 62.11 $\pm$ 7.70 \\
    EEG-TCNet \cite{matt52} & 55.45 $\pm$ 7.66 \\
    FBCNet \cite{matt53} & 53.09 $\pm$ 5.67 \\
    TCNet-Fusion \cite{matt51} & 45.00 $\pm$ 6.45 \\
    MBEEGSE \cite{matt50} & 56.45 $\pm$ 7.27 \\
    \midrule
    SPDNet \cite{spdnet} & 62.30 $\pm$ 3.12 \\
    SPDNet-BN(m) \cite{spdnetbn} &  62.76 $\pm$ 3.01 \\
    SPDNet-BN(m+v) \cite{kobler2022controlling}
     & 60.60 $\pm$ 3.57  \\
    SPDNet-Manifoldnorm \cite{chakraborty2020manifoldnorm} & 62.08 $\pm$ 3.56  \\
    SPDNet-LieBN \cite{chen2024liebn} &  60.15 $\pm$ 3.42 \\
    \midrule
    SPDNet-GBWBN-($\theta=1$) &  \textbf{65.47 $\pm$ 3.33}\\
    \bottomrule
    \end{tabular}
    \caption{Accuracy comparison on the MAMEM-SSVEP-II dataset.}
    \label{mamem_acc}
\end{table}

To reveal the characteristics learned from EEG signals, \cref{fig-heat} visualizes the gradient responses of the S11 model in the channel and temporal domains. The dark red pixels in the heatmaps highlight the robust gradient responses derived from the 8-channel SSVEP EEG data. Besides, the brain topomaps illustrated in \cref{fig-topo} highlight regions in the visual cortex, particularly Oz, with potent gradient activations marked in dark red. Notably, Oz shows the most pronounced gradient responses, especially between 0.25 and 0.60 seconds, underscoring its pivotal role in processing visual information. These findings dovetail with previous studies on the relationship between SSVEP and Oz in EEG research~\cite{han2018highly,mamem-a1}. The position of Oz, right at the heart of the primary visual cortex, is known for its high potential amplitudes and superior signal-to-noise ratio, further supporting these observations. To intuitively demonstrate the proposed power-deformed GBWM-based RBN method is capable of extracting more pivotal geometric features than the AIM-based ones, we also give the visualization results generated by the AIM-based RBN method. The detailed comparison is available at \cref{add}.

\begin{table}[!t]
    \centering
    \resizebox{\linewidth}{!}
    {
    \begin{tabular}{l|c c|c c}
    \toprule
    \multirow{2}{*}{Models} & \multicolumn{2}{c|}{HDM05} & \multicolumn{2}{c}{NTU RGB+D} \\
    \cmidrule(lr){2-3} \cmidrule(lr){4-5}
     & Acc. (\%) & Time & Acc. (\%) & Time \\
    \midrule
    RResNet \cite{RResNet} & 61.09 $\pm$ 0.60 & \textbf{3.62} & 52.54 $\pm$ 0.59 & \textbf{359.25} \\
    RResNet-BN (m) \cite{spdnetbn} & 63.31 $\pm$ 0.61 & 4.49 & 53.86 $\pm$ 1.19 & 478.62 \\
    {RResNet-BN (m+v) \cite{kobler2022controlling}} & 64.51 $\pm$ 1.00 & 6.65 & 53.94 $\pm$ 1.28 & 546.15 \\
    RResNet-Manifoldnorm \cite{chakraborty2020manifoldnorm} & 63.07 $\pm$ 0.80 & 6.24 & 53.50 $\pm$ 0.46 & 531.48 \\
    RResNet-LieBN \cite{chen2024liebn} & 66.43 $\pm$ 0.92 & 5.52 & 54.74 $\pm$ 0.75 & 523.19 \\
    \midrule
    RResNet-GBWBN-($\theta=1$) & 62.01 $\pm$ 1.23 & 8.21 & 59.45$\pm$ 0.38 & 563.47 \\
    RResNet-GBWBN-($\theta=0.5$) & \textbf{69.10 $\pm$ 0.83} & 8.21 & \textbf{59.72 $\pm$ 0.31} & 563.47 \\
    \bottomrule
    \end{tabular}
    }
    \caption{Comparison of different methods on the HDM05 and NTU RGB+D datasets.}
    \label{combined_acc}
    
\end{table}
\subsection{Integration with RResNet}
We further test the availability of the proposed GBWBN on the skeleton-based action recognition task, choosing the HDM05 and NTU RGB+D datasets as two representative benchmarks. For these experiments, the recently developed Riemannian Residual Network (RResNet) \cite{RResNet} is selected as the SPD backbone. The HDM05 dataset \cite{hdm05} contains 2,337 action sequences across 130 distinct classes. To ensure a fair comparison, we conduct experiments using pre-processed covariance features from \cite{spdnetbn}, yielding a reduced dataset comprising 2,086 sequences over 117 classes. The NTU RGB+D \cite{NtuRGB+D} is a large-scale action recognition dataset consisting of 56,880 video clips belonging to 60 different tasks, where we use the flattened 3D joint coordinates as feature vectors, in line with \cite{RResNet}. 
In the experiments, we evaluate the designed RResNet-GBWBN under the network structures of \{93, 30\} and \{75, 30\} for the HDM05 and NTU RGB+D datasets, respectively. We use half of the samples for training and the rest for testing. Besides, the learning rate, number of training epochs, and batch size are set to $2.5e^{-3}$, 200, and 30 on the HDM05 dataset, while those on the NTU RGB+D dataset are configured as $1e^{-2}$, 100, and 256, respectively.

The 10-fold experimental results, along with the average training time (s/epoch) for different methods, are presented in \cref{combined_acc}. We can observe that \textbf{RResNet-GBWBN with proper power deformation respectively improves the vanilla backbone by 8.01\% and 7.18\% } on the HDM05 and NTU RGB+D datasets, further demonstrating the efficacy of the proposed GBWBN module (placed post-BiMap layer) in geometric data analysis. It also can be noted that our GBWBN can generally benefit from power deformation, underscoring the significance of metric parameterization. While the proposed GBWBN module incurs slightly more training time than other RBN methods (involving more SVD operations), it consistently delivers superior performance across both datasets.

\begin{table}[t]
    \centering
    \begin{tabular}{c c c}
    \toprule
    $\lambda$ & \makecell{SPDNet-BN \cite{spdnetbn}} & SPDNet-GBWBN \\
    \midrule
    $1e^{-7}$ & 61.28 $\pm$ 3.89 & \textbf{65.17 $\pm$ 3.87} \\
    $1e^{-6}$ & 61.40 $\pm$ 3.86 & \textbf{65.32 $\pm$ 3.62} \\
    $1e^{-5}$ & 62.76 $\pm$ 3.01 & \textbf{65.47 $\pm$ 3.33} \\
    $1e^{-4}$ & 63.56 $\pm$ 3.44 & \textbf{65.44 $\pm$ 3.27} \\
    $1e^{-3}$ & 62.42 $\pm$ 3.21 & \textbf{65.65 $\pm$ 3.13} \\
    $1e^{-2}$ & 61.89 $\pm$ 3.44 & \textbf{64.56 $\pm$ 3.34} \\
    \bottomrule
    \end{tabular}
    \caption{Accuracy (\%) on the MAMEM-SSVEP-II dataset.}
    \label{tab:lambda_comparison}
\end{table}

\subsection{Ablations}
\textit{\textbf{Statistics of ICSM:}} As discussed in \cref{sec:intro}, given the input covariance matrix $\boldsymbol{\mathrm{X}}$ is not necessarily positive definite, a widely used regularization trick is applied: $\boldsymbol{\mathrm{X}} \leftarrow \boldsymbol{\mathrm{X}} + \lambda \boldsymbol{\mathrm{I}}$. Wherein, $\lambda$ is a small perturbation and $\boldsymbol{\mathrm{I}}$ is the identity matrix. 
This means that the smallest eigenvalues of $\boldsymbol{\mathrm{X}}$ is likely to be $\lambda$, contributing marginally to alleviate the issue of matrix ill-conditioning. Therefore, we compare the designed SPDNet-GBWBN with SPDNet-BN \cite{spdnetbn} on the MAMEM-SSVEP-II dataset across various values of $\mathrm{\lambda}$. As shown in \cref{tab:lambda_comparison}, the accuracy of SPDNet-GBWBN is superior to that of SPDNet-BN in all cases. To further support our claim, we present the condition number ($\kappa={\rm{eig}}_{\rm{max}}/{\rm{eig}}_{\rm{min}}$) and the corresponding proportion on the hidden SPD features without, before, and after the GBWBN layer under different $\lambda$ and training epochs in \cref{combined_lambda_eigenvalue}. \textbf{It is evident that ill-conditioning is prevalent across all the datasets, which explains the superiority of our GBWBN over previous RBN methods, as BWM is advantageous in addressing ill-conditioning} \cite{han2023learning}.

We further investigate the impact of BWM on ICSM to gain a better understanding of the effectiveness of the proposed GBWBN. As shown in \cref{fig:ill-condition}, ill-conditioned SPD matrices are concentrated near the boundary of the SPD manifold, restricting the network’s expressibility. In contrast, with the designed RBN layer, the matrices are shifted toward the spacious interior of the SPD manifold, thereby enhancing the network’s representation power. Besides, as illustrated in \cref{tab:metric}, the AIM-based operators could overly stretch eigenvalues, especially under ill-conditioned scenario, leading to computational instability. In contrast, the BWM is relatively more stable, enabling a more effective ICSM learning.

\begin{table*}[!t]
    \centering
    \resizebox{\linewidth}{!}{
    \begin{tabular}{c|c|c|ccc|ccc|ccc}
    \toprule    
    \multirow{2}{*}{Dataset} & \multirow{2}{*}{$\lambda$} &\multirow{2}{*} {Epochs} & \multicolumn{3}{c|}{$\kappa >10^{3}$} & \multicolumn{3}{c|}{$\kappa >10^{4}$} & \multicolumn{3}{c}{$\kappa >10^{5}$} \\
    \cmidrule(lr){4-12}
      & & &  w/o  & before  & \cellcolor{gray!15}after  & w/o  & before  & \cellcolor{gray!15}after &  w/o  & before  & \cellcolor{gray!15}after \\
    \midrule
    \multirow{12}{*}{MAMEM-SSVEP-II} 
    & \multirow{3}{*}{$1\mathrm{e}^{-7}$}  & 1  & 399 (79.8\%) & 108 (21.6\%) &  \cellcolor{gray!15} 0 (0.0\%)& 195 (39.0\%) & 5 (1.0\%) & \cellcolor{gray!15} 0 (0.0\%) & 74 (14.8\%)  & 0 (0.0\%) & \cellcolor{gray!15} 0 (0.0\%) \\
    &                                      & 100 & 428 (85.6\%) & 17 (3.4\%) & \cellcolor{gray!15} 0 (0.0\%) & 247 (49.4\%) & 0 (0.0\%) & \cellcolor{gray!15} 0 (0.0\%) & 115 (23\%) & 0 (0.0\%) & \cellcolor{gray!15} 0 (0.0\%) \\
    &                                      & 200 (Final) & 458 (91.6\%) & 13 (2.6\%) & \cellcolor{gray!15} 0 (0.0\%) & 274 (100.0\%) & 0 (0.0\%) & \cellcolor{gray!15} 0 (0.0\%) & 113 (54.8\%) & 0 (0.0\%) & \cellcolor{gray!15} 0 (0.0\%) \\
    \cmidrule(lr){2-12}
    & \multirow{3}{*}{$1\mathrm{e}^{-6}$}  & 1 & 375 (75.0\%) & 108 (21.6\%) & \cellcolor{gray!15} 0 (0.0\%) & 191 (38.2\%) & 6 (1.2\%) & \cellcolor{gray!15} 0 (0.0\%) & 64 (12.8\%) & 0 (0.0\%) & \cellcolor{gray!15} 0 (0.0\%) \\
    &                                     & 100 & 384 (76.8\%) & 7 (1.4\%) & \cellcolor{gray!15} 0 (0.0\%) & 201 (40.2\%) & 0 (0.0\%) & \cellcolor{gray!15} 0 (0.0\%) & 67 (13.4\%) & 0 (0.0\%) & \cellcolor{gray!15} 0 (0.0\%) \\
    &                                     & 200 (Final) & 370 (74.0\%) & 40 (8.0\%) & \cellcolor{gray!15} 0 (0.0\%) & 202 (40.4\%) & 0 (0.0\%) & \cellcolor{gray!15} 0 (0.0\%) & 71 (14.2\%) & 0 (0.0\%) & \cellcolor{gray!15} 0 (0.0\%) \\
    \cmidrule(lr){2-12}
    & \multirow{3}{*}{$1\mathrm{e}^{-5}$}  & 1 & 379 (75.8\%) & 119 (23.8\%) & \cellcolor{gray!15} 0 (0.0\%) & 163 (32.6\%) & 8 (1.6\%) & \cellcolor{gray!15} 0 (0.0\%) & 24 (4.8\%) & 0 (0.0\%) & \cellcolor{gray!15} 0 (0.0\%) \\
    &                                      & 100 & 302 (60.4\%) & 15 (3.0\%) & \cellcolor{gray!15} 0 (0.0\%) & 129 (25.8\%) & 0 (0.0\%) & \cellcolor{gray!15} 0 (0.0\%) & 21 (4.2\%) & 0 (0.0\%) & \cellcolor{gray!15} 0 (0.0\%) \\
    &                                      & 200 (Final) & 113 (22.6\%) & 32 (6.4\%) & \cellcolor{gray!15} 0 (0.0\%) & 11 (2.2\%) & 0 (0.0\%) & \cellcolor{gray!15} 0 (0.0\%) & 0 (0.0\%) & 0 (0.0\%) & \cellcolor{gray!15} 0 (0.0\%) \\
    \cmidrule(lr){2-12}
    & \multirow{3}{*}{$1\mathrm{e}^{-4}$}  & 1  & 194 (38.8\%) & 134 (26.8\%) & \cellcolor{gray!15} 0 (0.0\%) & 99 (19.8\%) & 12 (2.4\%) & \cellcolor{gray!15} 0 (0.0\%) & 4 (0.8\%) & 0 (0.0\%) & \cellcolor{gray!15} 0 (0.0\%) \\
    &                                      & 100 & 196 (39.2\%) & 30 (6.0\%) & \cellcolor{gray!15} 0 (0.0\%) & 43 (8.6\%) & 0 (0.0\%) & \cellcolor{gray!15} 0 (0.0\%) & 0 (0.0\%) & 0 (0.0\%) & \cellcolor{gray!15} 0 (0.0\%) \\
    &                                      & 200 (Final) & 205 (41.0\%) & 24 (4.8\%) & \cellcolor{gray!15} 0 (0.0\%) & 56 (11.2\%) & 0 (0.0\%) & \cellcolor{gray!15} 0 (0.0\%) & 1 (0.2\%) & 0 (0.0\%) & \cellcolor{gray!15} 0 (0.0\%) \\
    \midrule
    \multirow{3}{*}{HDM05} 
    & \multirow{3}{*}{$1\mathrm{e}^{-6}$}  & 1 & 2086 (100.0\%) & 2086 (100.0\%) & \cellcolor{gray!15}0 (0.0\%) & 2086 (100.0\%) &  2086 (100.0\%) & \cellcolor{gray!15}0 (0.0\%) & 2086 (100.0\%)  & 2086 (100.0\%) & \cellcolor{gray!15}0 (0.0\%) \\
    &                      & 100 & 2086 (100.0\%) &  2086 (100.0\%) & \cellcolor{gray!15}0 (0.0\%)& 2086 (100.0\%) &  2086 (100.0\%) & \cellcolor{gray!15}0 (0.0\%) & 2086 (100.0\%) &  2086 (100.0\%) & \cellcolor{gray!15}0 (0.0\%) \\
    &                      & 200 (Final) & 2086 (100.0\%) & 2086 (100.0\%) & \cellcolor{gray!15}0 (0.0\%) & 2086 (100.0\%) &  2086 (100.0\%) & \cellcolor{gray!15}0 (0.0\%) & 2086 (100.0\%) &  2086 (100.0\%) & \cellcolor{gray!15}0 (0.0\%) \\
    \midrule
    
    \multirow{3}{*}{NTU RGB+D} 
    & \multirow{3}{*}{$1\mathrm{e}^{-6}$}  & 1 & 56,880 (100.0\%) & 56,880 (100.0\%) & \cellcolor{gray!15}0 (0.0\%) & 56,880 (100.0\%) & 56,880 (100.0\%) & \cellcolor{gray!15}0 (0.0\%) & 56,880 (100.0\%) & 56,880 (100.0\%) & \cellcolor{gray!15}0 (0.0\%) \\
    &                      &  50 & 56,880 (100.0\%) & 56,880 (100.0\%) & \cellcolor{gray!15}0 (0.0\%) & 56,880 (100.0\%) & 56,880 (100.0\%) & \cellcolor{gray!15}0 (0.0\%) & 56,880 (100.0\%) & 56,880 (100.0\%) & \cellcolor{gray!15}0 (0.0\%) \\
    &                      & 100 (Final) & 56,880 (100.0\%) & 56,880 (100.0\%) & \cellcolor{gray!15}0 (0.0\%) & 56,880 (100.0\%) & 56,880 (100.0\%) & \cellcolor{gray!15}0 (0.0\%) & 56,880 (100.0\%) & 56,880 (100.0\%) & \cellcolor{gray!15}0 (0.0\%) \\ 
    \bottomrule
    \end{tabular}
    }
    \caption{Statistics (quantity and proportion) on the condition number ($\kappa={\rm{eig}}_{\rm{max}}/{\rm{eig}}_{\rm{min}}$) of the SPD features without ('w/o'), before, and after the GBWBN layer on different datasets across various values of $\lambda$ and training epochs.}
    \label{combined_lambda_eigenvalue}
\end{table*}

\begin{figure}
    \centering
    \includegraphics[width=0.90\linewidth]{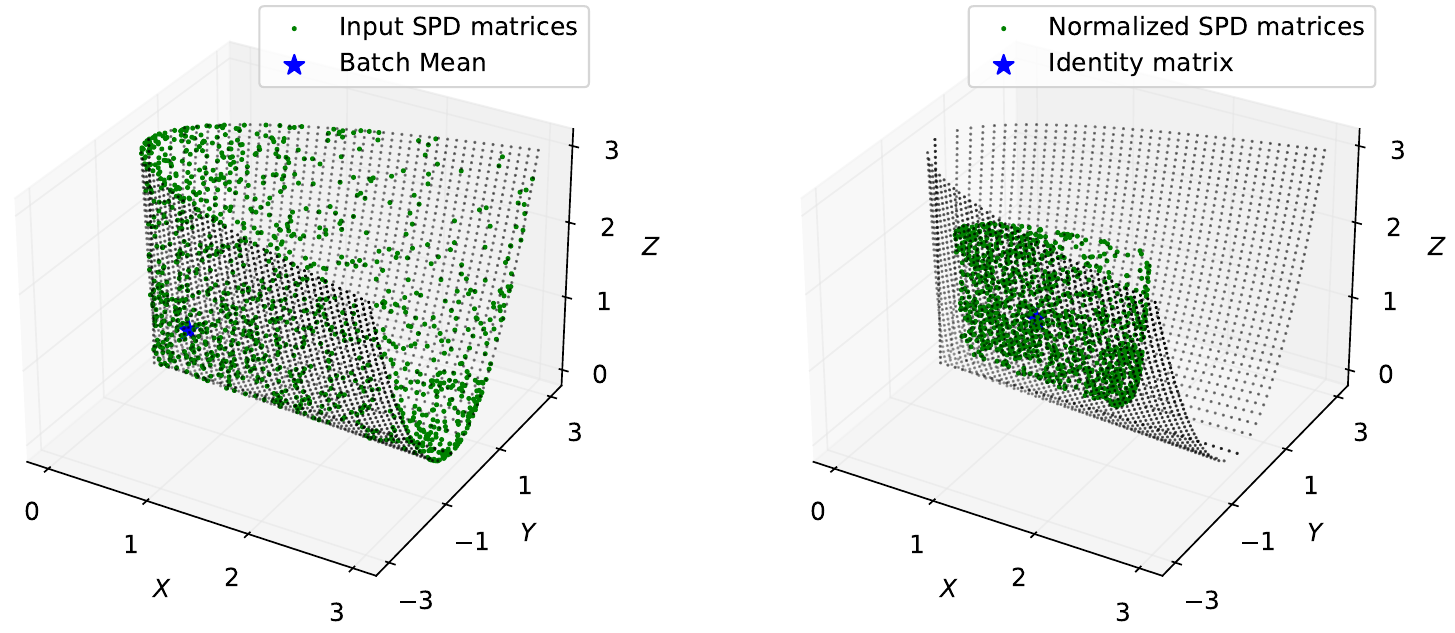}
    \caption{Illustration of ill-conditioned $2 \times 2$ SPD matrices (left) and the output (right) of the GBWBN layer. The black dots are semi-positive matrices ($\kappa=\infty$), denoting the SPD boundary, while the interior of the cone is the SPD manifold.}
    \label{fig:ill-condition}
\end{figure}

\begin{figure}
    \centering
    \includegraphics[width=0.98\linewidth,trim={0 0.5cm 0 0}]{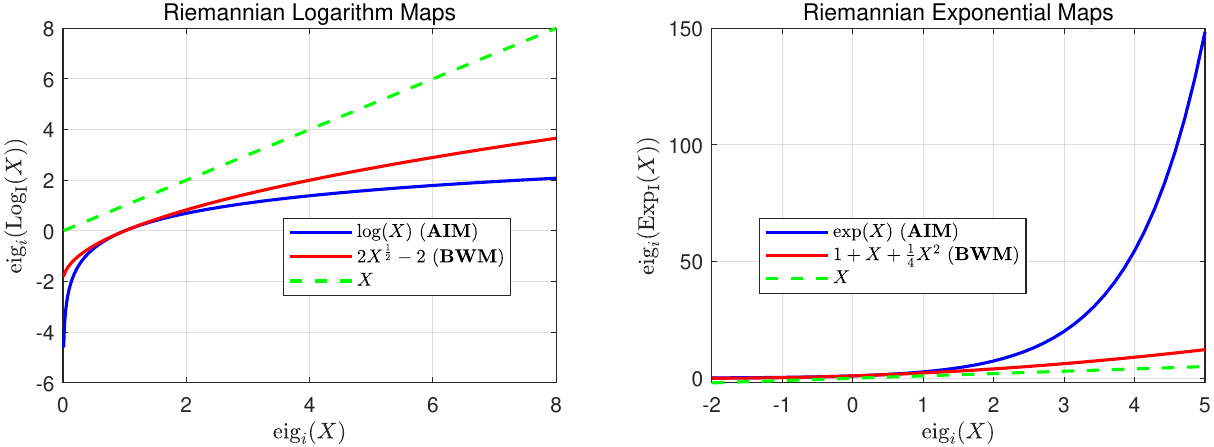}
    \caption{Numerical comparison of the Log and Exp maps under AIM and BWM, where we select the identity matrix as the base point for simplicity. Here, $x$-axis denotes an eigenvalue and $y$-axis is the corresponding output eigenvalue.} 
    \label{tab:metric}
\end{figure}

\textit{\textbf{Ablation on the SPD dimension:}} We conduct experiments on the NTU RGB+D dataset to investigate the impact of SPD dimension on the performance of the GBWBN. As shown in \cref{ntu_bimap_acc}, removing the BiMap layer (which increases the SPD dimension) results in a slight decrease in the learning ability of both RResNet-GBWBN and ResNet-BN. Nevertheless, the accuracy of RResNet-GBWBN-'w/o'BiMap is still higher than that of RResNet-BN-'w/o'BiMap.

\begin{table}[!t]
    \centering
    \begin{tabular}{cc}
    \bottomrule
    \multicolumn{1}{c}{Methods} & Acc. (\%)  \\
    \midrule 
    RResNet-GBWBN & \textbf{59.45 $\pm$ 0.38} \\
    RResNet-GBWBN-'w/o'BiMap  & 57.32 $\pm$ 0.43 \\
    RResNet-BN \cite{spdnetbn}  & 53.86 $\pm$ 1.19 \\
    RResNet-BN-'w/o'BiMap &  52.24 $\pm$ 1.33 \\
    \bottomrule
    \end{tabular}
    \caption{Accuracy comparison on the NTU RGB+D dataset.} 
    \label{ntu_bimap_acc}
\end{table}

\begin{table}[!t]
    \centering
    \resizebox{\linewidth}{!}{
    \begin{tabular}{c|cc|c}
    \toprule
    Datasets  & RResNet & RResNet-BWBN  & RResNet-GBWBN  \\
    \midrule
    HDM05 & 61.09 $\pm$ 0.60  & 61.94 $\pm$ 0.62 & \textbf{62.01 $\pm$ 1.23}  \\
    NTU RGB+D & 52.54 $\pm$ 0.59 & 57.40 $\pm$ 0.52 & \textbf{59.45 $\pm$ 0.38} \\
    \bottomrule
    \end{tabular}}
    \caption{Accuracy (\%) comparison under different RResNets.}
    \label{tab:bwm_comparison}
\end{table}

\begin{table}[!t]
    \centering
    \resizebox{\linewidth}{!}{
    \begin{tabular}{c|cccc}
    \toprule
    $\mathrm{\theta}$ & 0.25 & 0.5 & 0.75 & 1 \\
    \midrule
    HDM05 & 65.28 $\pm$ 1.03 & \textbf{69.10 $\pm$ 0.8}3 & 62.89 $\pm$ 0.74 & 62.01 $\pm$ 1.23 \\
    NTU RGB+D & 58.23 $\pm$ 0.28  & 59.72 $\pm$ 0.31 & \textbf{59.82 $\pm$ 0.27}  & 59.45 $\pm$ 0.38 \\
    \bottomrule
    \end{tabular}}
    \caption{Accuracy (\%) comparison under various values of $\theta$.}
    \label{ntu_theta_acc}
\end{table}

\textit{\textbf{Ablation on the GBWBN:}} As shown in \cref{sec:Propose}, GBWBN represents a principled generalization of the BWM-based RBN (simplified as BWBN). Therefore, we make experiments on the HDM05 and NTU RGB+D datasets to explore the effectiveness of the generalized BWM in RBN design. 

The results listed in \cref{tab:bwm_comparison} show that both GBWBN and BWBN can intensify the classification ability of the vanilla SPD backbone. Notably, the performance of BWBN is inferior to that of GBWBN. This further supports the notion that the learnable extension applied to BWBN allows our RBN algorithm to flexibly capture batch-specific metric geometry, thereby improving its representational capacity.

\textit{\textbf{Ablation on the power deformation:}} In this paper, we propose the power-deformed GBWM, which interpolates between GBWM ($\theta=1$) and LEM-like metric $(\theta \to 0)$. Thereby, we study the impact of such deformation on the performance of RBN under different $\theta$, selecting the HDM05 and NTU RGB+D datasets as two examples. The results given in \cref{ntu_theta_acc} show that the nonlinear activation of the metric geometry can enhance the model accuracy.

\section{Conclusion}
\label{sec:conclusion}
We introduce a novel RBN algorithm grounded in the GBW geometry in this article. Compared to other metrics, the GBWM offers distinct advantages in computing ICSM, which we believe provides an edge for the proposed GBWBN over existing methods. In addition, the power deformation is employed to further refine and enhance the latent GBWM geometry. Extensive experiments and ablation studies validate the superiority of our approach over previous RBN methods.

\clearpage
\section*{Acknowledgments}
\label{sec:acknowledgments}
This work was supported in part by the National Natural Science Foundation of China (62306127, 62020106012, 62332008), the Natural Science Foundation of Jiangsu Province (BK20231040), the Fundamental Research Funds for the Central Universities (JUSRP124015), and the National Key R\&D Program of China (2023YFF1105102, 2023YFF1105105).
{
    
    \small
    \bibliographystyle{ieeenat_fullname}
    \bibliography{main}
}


\clearpage
\clearpage
\appendix
\setcounter{page}{1}
\maketitlesupplementary

\section{Implementation details}
\label{datasets detail}
\textbf{HDM05} dataset \cite{hdm05} comprises 2,337 action sequences, spanning 130 distinct classes. To ensure a fair comparison, we conduct experiments using pre-processed covariance features provided by \cite{spdnetbn}, which results in a reduced dataset comprising 2,086 sequences spread over 117 classes. The size of the input SPD matrices is $93\times 93$, reflecting the 3D coordinates of 31 body joints provided in each skeleton frame. We use half of the samples for training and the rest for testing. 

\textbf{NTU RGB+D} \cite{NtuRGB+D} is another action recognition dataset which comprises 25 body joints. This large dataset concludes with 56,880 videos and 60 tasks. Following the approach of \cite{RResNet}, we use the flattened versions of 3D joint coordinates as our feature vectors. Therefore, the size of the input SPD matrices is $75 \times 75$. We also use half of the samples for training and the rest for testing. 

\textbf{MAMEM-SSVEP-II} dataset \cite{mamem} consists of time-synchronized EEG recordings from 11 participants, collected via an EGI 300 geodesic EEG system (256 channels at a 250 Hz sampling rate). For the SSVEP-based task, participants were asked to concentrate on any one of five distinct visual stimuli flickering at different frequencies (6.66, 7.50, 8.57, 10.00, and 12.00 Hz). This was done for five seconds across a series of sessions. Each session covered five cue-triggered trials for every stimulation frequency. Each trial, in turn, was broken down into four 1-second segments from the cue's onset (1s-5s). This protocol generated a total of 100 trials per session. To ensure an equitable comparison, we adhere to the data preparation and performance evaluation procedures described in \cite{matt17}.

Following  \cite{RResNet}, we evaluate our GBWBN under the network structure of \{93, 30\} and \{75, 30\} for the HDM05 and NTU RGB+D datasets respectively, with the GBWBN module embedded after the BiMap layer. The learning rate, training epoch, and batch size are set to $2.5e^{-3}$, 200, 30 on the HDM05 dataset, respectively. For the NTU RGB+D dataset, we set the learning rate to $1e^{-2}$, train for 100 epochs, and use a batch size of 256. 

For the MAMEM-SSVEP-II dataset, the criterion used in~\cite{matt19} is applied for data processing. Firstly, two convolutional layers (ConvLs) are adopted at the front of SPDNet models to extract more effective spatiotemporal representations of the original EEG signals. Then, grouping in the channel dimension results in an output feature matrix of size $15\times 126$, from which a $15\times 15$ SPD matrix can be derived. Moreover, the designed SPDNet-GBWBN is evaluated using an architecture of \{15, 12\} along with the training epoch
of 200, the learning rate of $2.5e^{-3}$ and the batch size of 30 on this dataset.

\section{Training the GBWBN module}
\label{train GBWBN}
Noting that the bias $\boldsymbol{\mathcal{G}}$ belongs to the SPD manifold, the traditional Stochastic Gradient Descent (SGD) algorithm fails to respect its Riemannian geometry during optimization. From a geometric viewpoint, choosing the Riemannian Stochastic Gradient Descent (RSGD) algorithm \cite{rsgd,spdmlr} proves to be an effective and rational strategy for optimizing manifold parameters. Let ${L}$ be the loss function of the GBWBN layer, we can have the following:
\begin{equation}
\label{eq:grad}
\begin{split}
    \boldsymbol{\mathcal{G}}_{t+1} ={\mathrm{Exp}}_{\boldsymbol{\mathcal{G}}_t}\left(-\mu\Pi_{\boldsymbol{\mathcal{G}}_t}\left(\nabla_{\boldsymbol{\mathcal{G}}}{L}|_{ \boldsymbol{\mathcal{G}}_t}\right)\right),
\end{split}
\end{equation}  
where $\mu$ denotes the learning rate, $\Pi$ signifies the projection operator used to transform the Euclidean gradient ($\nabla_{\boldsymbol{\mathcal{G}}}{L}|_{ \boldsymbol{\mathcal{G}}_t})$, and ${\mathrm{Exp}}$ is the exponential mapping shown in \cref{opera}.

In previous literature, distinct formulas for the gradient of eigenvalue functions on SPD matrices have been independently established~\cite{ionescu2015training}. 
Considering an eigenvalue function: $\boldsymbol{\mathrm{Y}}=f(\boldsymbol{\mathrm{X}})=\boldsymbol{\mathrm{U}}f(\boldsymbol{\mathrm{\Sigma}})\boldsymbol{\mathrm{U}}^{\mathrm{T}}$, where $\boldsymbol{\mathrm{X}}=\boldsymbol{\mathrm{U}}\boldsymbol{\mathrm{\Sigma}}\boldsymbol{\mathrm{U}}^{\mathrm{T}}$,
the gradient of $L$ w.r.t $\boldsymbol{\mathrm{X}}$ can be computed as follows:
\begin{equation}
\label{eq:svd}
\begin{split}
    \frac{\partial L}{\partial \boldsymbol{\mathrm{X}}}= \boldsymbol{\mathrm{U}}\left({f}' (\boldsymbol{\mathrm{X}})\odot \left(\boldsymbol{\mathrm{U}}^{T}(\frac{\partial L}{\partial \boldsymbol{\mathrm{Y}}})\boldsymbol{\mathrm{U}}\right)\right)\boldsymbol{\mathrm{U}}^{T},
\end{split}
\end{equation}
with 
\begin{equation}
\label{eq:dif}
\begin{split}
    f'(\boldsymbol{\mathrm{X}})_{i,j}=\left\{\begin{array}{ll}\frac{f\left(\sigma_{i}\right)-f\left(\sigma_{j}\right)}{\sigma_{i}-\sigma_{j}}, & \text {if }\sigma_{i}\neq\sigma_{j}. \\f^{\prime}\left(\sigma_{i}\right),&\text{otherwise}.
\end{array}\right.
\end{split}
\end{equation}
\cref{eq:svd} is called the Daleck\u ii-Kre\u in formula and $f'(\lambda_{i},\lambda_{j})$ is the first divided difference of $f$ at $(\lambda_{i},\lambda_{j})$. We acknowledge the work presented in ~\cite{spdnetbn} which illustrates the equivalence between the Daleck\u ii-Kre\u in formula and the formula introduced in ~\cite{ionescu2015training}. Given the superior numerical stability of the Daleck\u ii-Kre\u formula, we utilize this formula throughout the backward pass.

Given the following Lyapunov equation:
\begin{equation}
\label{Lyeq}
\begin{split}
    {\boldsymbol{\mathrm{P}}\boldsymbol{\mathrm{X}}+\boldsymbol{\mathrm{X}}\boldsymbol{\mathrm{P}}}={\boldsymbol{\mathrm{S}}}, \text{ with } \boldsymbol{\mathrm{P}}=\mathcal{L}_{\boldsymbol{\mathrm{X}}}(\boldsymbol{\mathrm{S}}),
\end{split}
\end{equation}
according to ~\cite{chen2024rmlr}, the gradients w.r.t. the Lyapunov operator in the backpropagation process can be computed as follows:
\begin{align}
\label{Lydif2}
    &\frac{\partial \mathcal{L}}{\partial \boldsymbol{\mathrm{S}}}=\mathcal{L}_{\boldsymbol{\mathrm{X}}}(\frac{\partial \mathcal{L}}{\partial \boldsymbol{\mathrm{P}}}),\\&\frac{\partial \mathcal{L}}{\partial \boldsymbol{\mathrm{X}}}=-\boldsymbol{\mathrm{P}}\mathcal{L}_{\boldsymbol{\mathrm{X}}}(\frac{\partial \mathcal{L}}{\partial \boldsymbol{\mathrm{P}}})-\mathcal{L}_{\boldsymbol{\mathrm{X}}}(\frac{\partial \mathcal{L}}{\partial \boldsymbol{\mathrm{P}}})\boldsymbol{\mathrm{P}}.
\end{align}

\section{Learning the bias}
\label{app: learn bias}
The bias operation $\boldsymbol{\mathrm{X}}_{i}^{bias}=\rieexp_{\boldsymbol{\mathcal{G}}}\left(\Gamma_{\boldsymbol{\mathrm{I}}_{d}\to \boldsymbol{\mathcal{G}}}\left(\rielog_{\boldsymbol{\mathrm{I}}_{d}}\left(\boldsymbol{\mathrm{X}}_{i}\right)\right)\right)$ can be implemented by three auxiliary layers: 1) $\boldsymbol{\mathrm{Y}}_{i}=f^{(1)}(\boldsymbol{\mathrm{X}}_{i})=\rielog_{\boldsymbol{\mathrm{I}}_{d}}(\boldsymbol{\mathrm{X}}_{i})$; 2) $\boldsymbol{\mathrm{Z}}_{i}=f^{(2)}(\boldsymbol{\mathrm{Y}}_{i},\boldsymbol{\mathcal{G}})=\Gamma_{\boldsymbol{\mathrm{I}}_{d}\to \boldsymbol{\mathcal{G}}}(\boldsymbol{\mathrm{Y}}_{i})$; 3) $\boldsymbol{\mathrm{X}}_{i}^{bias}=f^{(3)}(\boldsymbol{\mathrm{Z}}_{i},\bscal{G}) =\rieexp_{\boldsymbol{\mathcal{G}}}(\boldsymbol{\mathrm{Z}}_{i})$. Therefore, the bias operation can be expressed as:
\begin{equation}
\label{biasstr}
\begin{split}
    \boldsymbol{\mathrm{X}}_{i}\overset{f^{(1)}}{\longrightarrow}\boldsymbol{\mathrm{Y}}_{i}\overset{f^{(2)}}{\longrightarrow}\boldsymbol{\mathrm{Z}}_{i}\overset{f^{(3)}}{\longrightarrow}\boldsymbol{\mathrm{X}}_{i}^{bias}.
\end{split}
\end{equation}
\begin{figure}[!t]
 \centering
 \includegraphics[scale=0.3]{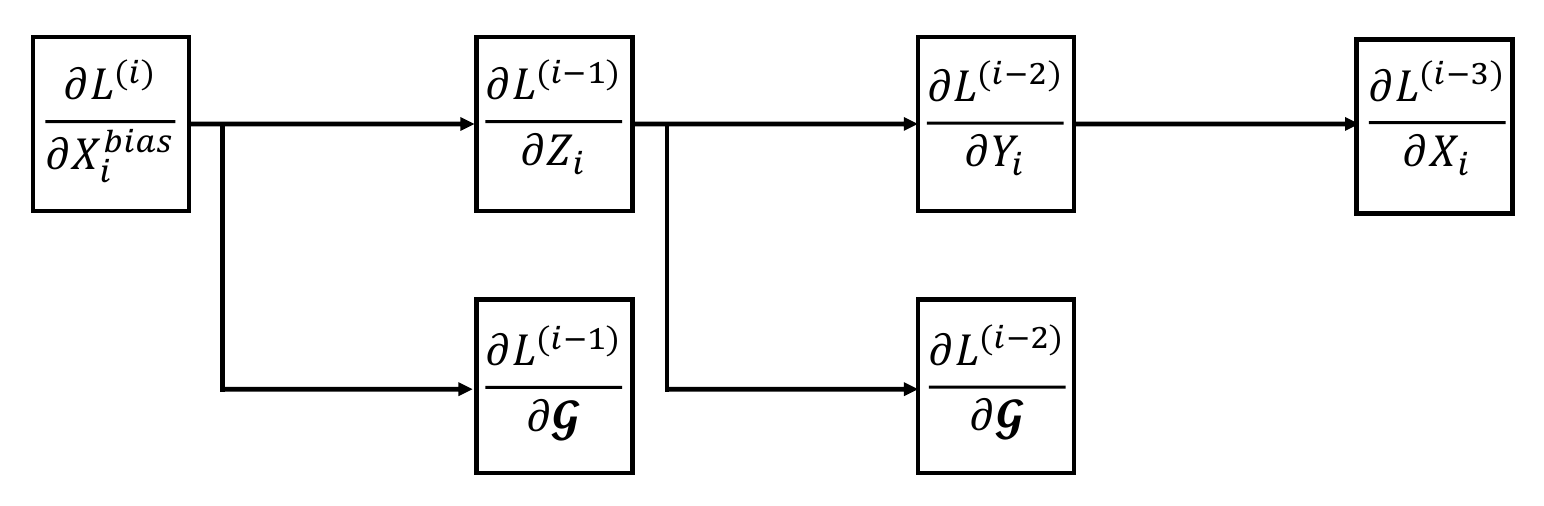}
 \caption{The backward pass of bias operation}
 \label{fig-bp}
\end{figure} 

Let $L:\mathbb{R}^{d}\rightarrow \mathbb{R}$ be the loss function of the designed GBWBN. The backpropagation of the bias operation
is illustrated in \cref{fig-bp}, where $L^{(i)}=L\circ f^{(K)}\circ\dots\circ f^{(i)}$ represents the loss of the $i$-th layer. It is crucial to note that the computation of the partial derivatives of $f^{(3)}$ w.r.t. $\bsym{Z}_{i}$ and $\bscal{G}$ involves the backpropagation of Lyapunov operator. The partial derivative of $f^{(2)}$ w.r.t. $\bscal{G}$ is computed automatically by PyTorch. Based on the above computations, we can deduce the Euclidean gradient ($\nabla_{\boldsymbol{\mathcal{G}}}{L}|_{ \boldsymbol{\mathcal{G}}_t}$) in the proposed BWBN module. Then, \cref{eq:grad} can be used to update the bias. Since the partial derivative of $f^{(1)}$ w.r.t. $\bsym{X}_{i}$ involves the computation of square root $(\cdot )^{\frac{1}{2}}$, following SPDNetBN~\cite{spdnetbn}, we use \cref{eq:dif} to solve it.

\section{Proofs of the propositions and theories in the main paper}
\label{proof:main}

\subsection{Proof of Prop. 3.1}
\label{app:proof-control_var}
Using Riemannian distance in \cref{opera}, we can deduce that
{\small
\begin{equation}
\label{eq:control_var_deduce}
\begin{split}
    {d_{\mathrm{BW}}}(\boldsymbol{\mathrm{X}},\boldsymbol{\mathrm{I}}_{d})&=\left(\mathrm{tr}({\boldsymbol{\mathrm{X}}})+\mathrm{tr}(\boldsymbol{\mathrm{I}}_{d})-2\mathrm{tr}({\boldsymbol{\mathrm{X}}^{\frac{1}{2}}\boldsymbol{\mathrm{I}}_{d}\boldsymbol{\mathrm{X}}^{\frac{1}{2}})}^{\frac{1}{2}}\right)^{\frac{1}{2}}\\&=\left(\mathrm{tr}({\boldsymbol{\mathrm{X}}})+\mathrm{tr}(\boldsymbol{\mathrm{I}}_{d})-2\mathrm{tr}(\boldsymbol{\mathrm{X}})^{\frac{1}{2}}\right)^{\frac{1}{2}}\\&=\left(\mathrm{tr}\left(\left({\boldsymbol{\mathrm{X}}}^{\frac{1}{2}}-\boldsymbol{\mathrm{I}}_{d}\right)^{2}\right)\right)^{\frac{1}{2}}.
\end{split}
\end{equation}
}

Similarly, using the exponential map and logarithmic map in \cref{opera}, we can deduce that
\begin{equation}
\label{eq: exp_bwm}
\begin{split}
    \mathrm{Exp}_{\boldsymbol{\mathrm{I}}_{d}}\boldsymbol{\mathrm{S}}=\boldsymbol{\mathrm{I}}_{d}+\boldsymbol{\mathrm{S}}+\mathcal{L}_{\boldsymbol{\mathrm{I}}_{d}}(\boldsymbol{\mathrm{S}})\boldsymbol{\mathrm{I}}_{d}\mathcal{L}_{\boldsymbol{\mathrm{I}}_{d}}(\boldsymbol{\mathrm{S}}),
\end{split}
\end{equation}
\begin{equation}
\label{eq: log_bwm}
\begin{split}
    \mathrm{Log}_{\boldsymbol{\mathrm{I}}_{d}}\boldsymbol{\mathrm{X}}=2\boldsymbol{\mathrm{X}}^{\frac{1}{2}}-2\boldsymbol{\mathrm{I}}_{d}.
\end{split}
\end{equation}

Since $\boldsymbol{\mathrm{I}}_{d}\mathcal{L}_{\boldsymbol{\mathrm{I}}_{d}}(\boldsymbol{\mathrm{S}})+\mathcal{L}_{\boldsymbol{\mathrm{I}}_{d}}(\boldsymbol{\mathrm{S}})\boldsymbol{\mathrm{I}}_{d}=\boldsymbol{\mathrm{S}}$, we can obtain: $\mathcal{L}_{\boldsymbol{\mathrm{I}}_{d}}(\boldsymbol{\mathrm{S}})=\frac{1}{2}\boldsymbol{\mathrm{S}}$. Then, \cref{eq: exp_bwm} can be reformulated as:
\begin{equation}
\label{eq: exp_bwm_re}
\begin{split}
    \mathrm{Exp}_{\boldsymbol{\mathrm{I}}_{d}}\boldsymbol{\mathrm{S}}=\boldsymbol{\mathrm{I}}_{d}+\boldsymbol{\mathrm{S}}+\frac{1}{4}\boldsymbol{\mathrm{S}}^{2}=(\boldsymbol{\mathrm{I}}_{d}+\frac{1}{2}\boldsymbol{\mathrm{S}})^{2}.
\end{split}
\end{equation}

Combining \cref{eq: log_bwm} and \cref{eq: exp_bwm_re}, we have
\begin{equation}
\label{eq:scale}
\begin{split}
    \boldsymbol{\mathrm{\psi}}_{\boldsymbol{\mathrm{s}}}(\boldsymbol{\mathrm{X}})&=\mathrm{Exp}_{\boldsymbol{\mathrm{I}}_{d}}\left(\boldsymbol{\mathrm{s}}\mathrm{Log}_{\boldsymbol{\mathrm{I}}_{d}}\left(\boldsymbol{\mathrm{X}}\right)\right)\\&=\mathrm{Exp}_{\boldsymbol{\mathrm{I}}_{d}}\left(\boldsymbol{\mathrm{s}}\left(2\boldsymbol{\mathrm{X}}^{\frac{1}{2}}-2\boldsymbol{\mathrm{I}}_{d}\right)\right)\\&=\left(\boldsymbol{\mathrm{s}}\left(\boldsymbol{\mathrm{X}}^{\frac{1}{2}}-\boldsymbol{\mathrm{I}}_{d}\right)+\boldsymbol{\mathrm{I}}_{d}\right)^{2}
\end{split}
\end{equation}

Therefore,
\begin{equation}
\label{eq:scale_proof}
\begin{split}               
   {d_{\mathrm{BW}}} \left(\boldsymbol{\mathrm{\psi}}_{\boldsymbol{\mathrm{s}}}\left(\boldsymbol{\mathrm{X}}\right),\boldsymbol{\mathrm{I}}_{d}\right)
    &=\left(\mathrm{tr}\left(\left(\boldsymbol{\mathrm{s}}\left(\boldsymbol{\mathrm{X}}^{\frac{1}{2}}-\boldsymbol{\mathrm{I}}_{d}\right)+\boldsymbol{\mathrm{I}}_{d}-\boldsymbol{\mathrm{I}}_{d}\right)^{2}\right)\right)^{\frac{1}{2}}\\
    &=\left(\mathrm{tr}\left(\boldsymbol{\mathrm{s}}^{2}\left({\boldsymbol{\mathrm{X}}}^{\frac{1}{2}}-\boldsymbol{\mathrm{I}}_{d}\right)^{2}\right)\right)^{\frac{1}{2}}\\
    &=\boldsymbol{\mathrm{s}}\left(\mathrm{tr}\left(\left({\boldsymbol{\mathrm{X}}}^{\frac{1}{2}}-\boldsymbol{\mathrm{I}}_{d}\right)^{2}\right)\right)^{\frac{1}{2}}\\
    &=\boldsymbol{\mathrm{s}} {d_{\mathrm{BW}}}\left(\boldsymbol{\mathrm{X}},\boldsymbol{\mathrm{I}}_{d}\right)
\end{split}
\end{equation}

\subsection{Proof of Prop. 3.2}
\label{app:proof-deformed_0}

As shown in \cite{chen2024liebn}, when $\theta \to 0$, for all $\boldsymbol{\mathrm{X}} \in {{\mathcal{S}}_{++}^d}$ and all $\boldsymbol{\mathrm{S}} \in {T}_{\boldsymbol{\mathrm{X}}}{\boldsymbol{{\mathcal{S}}_{++}^d}}$, we have
{\small
\begin{equation}
    \label{eq:deformed_i}
     g^{(\theta)-\mathrm{GBW}}_{\boldsymbol{\mathrm{X}}} \left(\boldsymbol{\mathrm{S}},\boldsymbol{\mathrm{S}}\right) \longrightarrow g^{\mathrm{GBW}}_{\boldsymbol{\mathrm{I}}_{d}}\left(\log_{*,\boldsymbol{\mathrm{X}}}\left(\boldsymbol{\mathrm{S}}\right),\log_{*,\boldsymbol{\mathrm{X}}}\left(\boldsymbol{\mathrm{S}}\right)\right).
\end{equation}
}
Using \cref{eq:metric_gbwm}, we can deduce that 
\begin{equation}
\label{eq:metric_gbwm_i}
\begin{split}
    {g}_{\boldsymbol{\mathrm{I}}_{d}}^{\mathrm{GBW}}\left(\boldsymbol{\mathrm{S}},\boldsymbol{\mathrm{S}}\right)&= \frac{1}{2}{\mathrm{tr}}\left(\mathcal{L}_{\boldsymbol{\mathrm{I}}_{d},\boldsymbol{\mathrm{M}}}\left(\boldsymbol{\mathrm{S}}\right)\boldsymbol{\mathrm{S}}\right)\\&=\frac{1}{2}{\mathrm{tr}}\left(\mathcal{L}_{\boldsymbol{\mathrm{M}}}\left(\boldsymbol{\mathrm{S}}\right)\boldsymbol{\mathrm{S}}\right)\\&= \frac{1}{2}\left \langle \mathcal{L}_{\boldsymbol{\mathrm{M}}}\left(\boldsymbol{\mathrm{S}}\right),\boldsymbol{\mathrm{S}}\right \rangle 
\end{split}
\end{equation}
Combining \cref{eq:deformed_i} and \cref{eq:metric_gbwm_i}, we have 
{\small
\begin{equation}
\label{final_proof_pro2}
\begin{split}
    g^{(\theta)-\mathrm{GBW}}_{\boldsymbol{\mathrm{X}}} \left(\boldsymbol{\mathrm{S}},\boldsymbol{\mathrm{S}}\right) &\overset{\theta \to 0}{\longrightarrow} g^{\mathrm{GBW}}_{\boldsymbol{\mathrm{I}}_{d}}\left(\log_{*,\boldsymbol{\mathrm{X}}}\left(\boldsymbol{\mathrm{S}}\right),\log_{*,\boldsymbol{\mathrm{X}}}\left(\boldsymbol{\mathrm{S}}\right)\right)\\
    &=
    \frac{1}{2} \left \langle \log_{*,\boldsymbol{\mathrm{X}}}\left(\mathcal{L}_{\boldsymbol{\mathrm{M}}}\left(\boldsymbol{\mathrm{S}}\right)\right),\log_{*,\boldsymbol{\mathrm{X}}}\left(\boldsymbol{\mathrm{S}}\right) \right \rangle 
\end{split}
\end{equation}
}

\subsection{Proof of Prop. 3.3}
\label{app:proof-local_deform}

For $(\theta)$-GBW, We have
\begin{equation}
\label{local_deform_1}
\begin{split}
    g^{(\theta)-\mathrm{GBW}}_{\boldsymbol{\mathrm{X}}} &\left(\boldsymbol{\mathrm{S}}_{1},\boldsymbol{\mathrm{S}}_{2}\right)\\&=\frac{1}{\theta^2} g_{\boldsymbol{\mathrm{X}}^\theta}^{\mathrm{GBW}}\left(\Tilde{\boldsymbol{\mathrm{S}}}_{1},\Tilde{\boldsymbol{\mathrm{S}}}_{2}\right)
    \\&= \frac{1}{4} \cdot \frac{1}{\theta^2} \mathrm{vec}(\Tilde{\boldsymbol{\mathrm{S}}}_{1})^\top (\Tilde{\boldsymbol{\mathrm{X}}} \otimes \Tilde{\boldsymbol{\mathrm{X}}} )^{-1} \mathrm{vec}(\Tilde{\boldsymbol{\mathrm{S}}}_{2}) \\&\stackrel{(1)}{=} \frac{1}{4} \cdot \frac{1}{\theta^2} g ^{\mathrm{AI}}_{\Tilde{\boldsymbol{\mathrm{X}}}}(\boldsymbol{\mathrm{S}}_{1},\boldsymbol{\mathrm{S}}_{2})\\&=\frac{1}{4}g^{(\theta)-\mathrm{AI}}_{\boldsymbol{\mathrm{X}}}\left(\boldsymbol{\mathrm{S}}_{1},\boldsymbol{\mathrm{S}}_{2}\right),
\end{split}
\end{equation}

where $\Tilde{\boldsymbol{\mathrm{S}}}_{1}=(\phi_\theta)_{*,\boldsymbol{\mathrm{X}}}(\boldsymbol{\mathrm{S}}_{1}), \Tilde{\boldsymbol{\mathrm{S}}}_{2}=(\phi_\theta)_{*,\boldsymbol{\mathrm{X}}}(\boldsymbol{\mathrm{S}}_{2}), \Tilde{\boldsymbol{\mathrm{X}}}=\boldsymbol{\mathrm{X}}^{\theta}$, and (1) follows from \cref{eq:metric_aim}.

\subsection{Proof of Thm. 3.4}
\label{app:proof-power-gbwbn} 

Our proof is inspired by \cite[Thm. 5.3]{chen2024liebn}, which characterize the calculation of LieBN \cite{{chen2024liebn}} under the Riemannian isometry between Lie groups.
However, our RBN does not involve Lie group structures but relies on Riemannian operators. 
In the following, we will use the properties of Riemannian operators under the Riemannian isometry.

We denote $\mathrm{Exp}, \mathrm{Log}, \mathrm{\Gamma}, d(\cdot, \cdot)$ and $\mathrm{WFM}$ are Riemannian exponentiation, logarithm, parallel transportation, geodesic distance, and weighted Fréchet mean on $(\boldsymbol{{\mathcal{S}}_{++}^d},{g}^{\mathrm{BW}})$, while $\widetilde{\mathrm{Exp}}, \widetilde{\mathrm{Log}},\widetilde{\mathrm{\Gamma}},\widetilde{\mathrm{d}}(\cdot, \cdot)$ and $\widetilde{\mathrm{WFM}}$ are the counterparts on $(\boldsymbol{{\mathcal{S}}_{++}^d}, g^{\theta\text{-GBW}})$. Since $\textit{f}: (\boldsymbol{{\mathcal{S}}_{++}^d}, g^{BW}) \to (\boldsymbol{{\mathcal{S}}_{++}^d}, g^{\theta\text{-GBW}})$ is a Riemannian isometry, for $\boldsymbol{\mathrm{X}}_{1}, \boldsymbol{\mathrm{X}}_{2} \in \boldsymbol{{\mathcal{S}}_{++}^d} $, $\boldsymbol{\mathrm{S}}_{1}, \boldsymbol{\mathrm{S}}_{2} \in{T}_{\boldsymbol{\mathrm{X}}_{1}}\boldsymbol{{\mathcal{S}}_{++}^d}$, and $\{{\boldsymbol{\mathrm{X}}_{i}}\}_{i=1}^{N}\in \boldsymbol{{\mathcal{S}}_{++}^d}$ with weights $\{\omega_{i}\}_{i=1}^{N}$  satisfying ${\omega }_{i}\ge {0}$ and $\sum_{i\le N}{\omega}_{i}= 1$ .we have the following:
\begin{align}
    & \widetilde{\mathrm{Exp}}_{\bsym{X}_{1}}\bsym{S}_{1}=f\left ( \mathrm{Exp}_{f^{-1}(\bsym{X}_{1})}{f^{-1}}_{*,\bsym{X}_{1}}(\bsym{S}_{1}) \right ),\label{eq:equal_exp} \\
    & \widetilde{\mathrm{Log}}_{\bsym{X}_{1}}{\bsym{X}_{2}}=(f^{-1}_{*,\bsym{X}_{1}})^{-1}\left(\mathrm{Log}_{f^{-1}(\bsym{X}_{1})}f^{-1}(\bsym{X}_{2})\right),\label{eq:equal_log} 
\end{align}
\begin{equation}
\begin{split}
    & \widetilde{\Gamma}_{\bsym{X}_{1}\to\bsym{X}_{2}}{\bsym{S}_{1}}\\&=\left(f^{-1}_{*,\bsym{X}_{2}}\right)^{-1}\left( \Gamma_{f^{-1}(\bsym{X}_{1})\to f^{-1}(\bsym{X}_{2})}{f^{-1}}_{*,\bsym{X}_{1}}(\bsym{S}_{1})\right),\label{eq:equal_pt}  
\end{split}
\end{equation}
\begin{align}
    & \widetilde{d}(\bsym{X}_{1},\bsym{X}_{2})=d\left(f^{-1}(\bsym{X}_{1}),f^{-1}(\bsym{X}_{2})\right),\label{eq:equal_dist} \\
    & \widetilde{\mathrm{WFM}}(\{{\boldsymbol{\mathrm{X}}_{i}}\},\{ \omega_{i}\})=f(\mathrm{WFM}(\{f^{-1}({\boldsymbol{\mathrm{X}}_{i}})\},\{ \omega_{i}\})). \label{eq:equal_wfm}
\end{align}

Note that:
\begin{equation}
\begin{split}
    &\widetilde{\mathrm{Exp}}_{\bsym{X}_2}\left(\widetilde{\Gamma}_{\bsym{X}_{1}\to \bsym{X}_{2}}{\widetilde{\mathrm{Log}}_{\bsym{X}_{1}}({\bsym{S}_{1}})}\right)\\
    &\stackrel{(1)}{=}    \widetilde{\mathrm{Exp}}_{\bsym{X}_2}\left(\widetilde{\Gamma}_{\bsym{X}_{1}\to \bsym{X}_{2}}(f^{-1}_{*,\bsym{X}_{1}})^{-1}\left(\mathrm{Log}_{f^{-1}(\bsym{X}_{1})}f^{-1}(\bsym{S}_{1})\right)\right)\\
    &\stackrel{(2)}{=}\widetilde{\mathrm{Exp}}_{\bsym{X}_2}(f^{-1}_{*,\bsym{X}_{2}})^{-1}\left( \Gamma_{f^{-1}(\bsym{X}_{1})\to f^{-1}(\bsym{X}_{2})}\left(\bscal{A}\right)\right)\\
    &\stackrel{(3)}{=}f\left ( \mathrm{Exp}_{f^{-1}(\bsym{X}_{1})} \left( \Gamma_{f^{-1}(\bsym{X}_{1})\to f^{-1}(\bsym{X}_{2})}\left(\bscal{A}\right)\right) \right ),
\end{split}
\end{equation}
where $\bscal{A} = \mathrm{Log}_{f^{-1}(\bsym{X}_{1})}f^{-1}(\bsym{S}_{1})$.

The derivation comes from the following.

(1)  follows from \cref{eq:equal_log}.

(2)  follows from \cref{eq:equal_pt}.

(3)  follows from \cref{eq:equal_exp}.

Then, we denote Eq. (11) and Eq. (13)  on $(\boldsymbol{{\mathcal{S}}_{++}^d}, g^{BW})$ of the main paper as $\xi(\cdot|\bscal{B},\nu^{2},\bscal{G},\bsym{s})$, while $\widetilde{\xi}(\cdot|\bscal{B},\nu^{2},\bscal{G}, \bsym{s})$ is the counterpart on $(\boldsymbol{{\mathcal{S}}_{++}^d}, g^{\theta\text{-GBW}})$. We can deduce that: 
\begin{equation}
\label{eq:equal_map}
\begin{split}
     &\widetilde{\xi}(\bsym{X}_{i}|\bscal{B},\nu^{2},\bscal{G},\bsym{s})\\&=f\left(\xi\left(f^{-1}(\bsym{X}_{i})|f^{-1}(\bscal{B}),\nu^{2},f^{-1}(\bscal{G}),\bsym{s}\right)\right).
\end{split}
\end{equation}

Since \textit{f} is a Riemannian isometry, we can directly deduce that the Fréchet variance and Fréchet mean are both the same. Therefore, we can obtain that:

\begin{equation}
\begin{split}
\label{eq:equal_bn}
   &\selectfont(\theta)\text{-GBWBN}(\boldsymbol{\mathrm{X}}_{i},\boldsymbol{\mathcal{G}},\omega,\boldsymbol{\mathrm{\epsilon}},\boldsymbol{\mathrm{s}})\\&=f\left(\text{BWBN}\left(\boldsymbol{f^{-1}(\mathrm{X}}_{i}),f^{-1}(\boldsymbol{\mathcal{G}}),\omega,\boldsymbol{\mathrm{\epsilon}},\boldsymbol{\mathrm{s}}\right)\right).
\end{split}
\end{equation}

\section{EEG model interpretation}
\label{add}

\begin{figure*}[!t]
 \centering
 \includegraphics[width=0.9\linewidth]{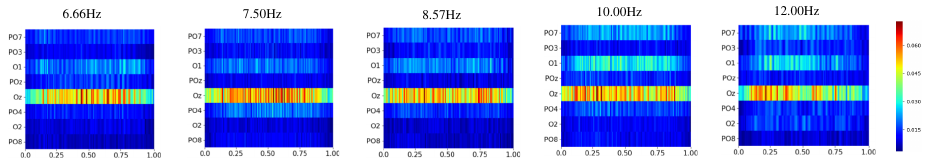}
 \caption{ The heatmaps of five frequency classes of the MAMEM-SSVEP-II dataset demonstrated by SPDNet-BN. 
 }
 \label{fig-heat1}
\end{figure*}

\begin{figure*}[!t]
 \centering
 \includegraphics[width=0.9\linewidth]{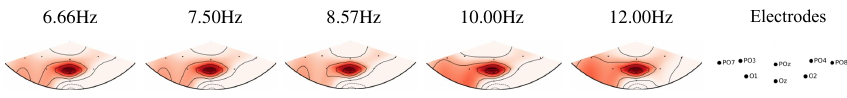}
 \caption{ The brain topology maps obtained on the MAMEM-SSVEP-II dataset demonstrated by SPDNet-BN.
 }
 \label{fig-topo1}
\end{figure*}

\cref{fig-heat1} and \cref{fig-topo1} are the visualization results generated by the SPDNet-BN \cite{spdnetbn}. Compared with \cref{fig-heat} and \cref{fig-topo}, the gradient responses of our method and SPDNet-BN are both concentrated in the OZ channel, clearly demonstrating the effectiveness of the Riemannian method. However, our method shows a concentration between 0.25 and 0.60 seconds, dovetailed with previous studies on the relationship between SSVEP and Oz in EEG research ~\cite{han2018highly,mamem-a1}, whereas SPDNet-BN exhibits a less focused response. This observation further supports that the proposed power-deformed GBWM-based RBN can extract more essential geometric features than AIM-based methods.

\cref{topo} illustrates the spatial distribution across distinct epochs corresponding to each visual stimulus in the MAMEM-SSVEP-II dataset. Each epoch displays similar spatial topo maps with a consistently strong gradient response located at the Oz spanning all epochs. Moreover, despite varying frequencies of visual stimulation, the gradient response location across the scalp remained steady throughout the given period. 

In conclusion, SPDNet-GBWBN can identify subtle discrepancies hidden within similar spatial distributions represented by the topographic map of each epoch among the five frequencies used to decode SSVEP-EEG signals. The visualization results validate the proficiency and capability of the proposed SPDNet-GBWBN in identifying and capturing the elusive non-stationarity observed in dynamic brain activity.

\begin{figure*}[!t]
 \centering
 \includegraphics[width=0.5\textwidth]{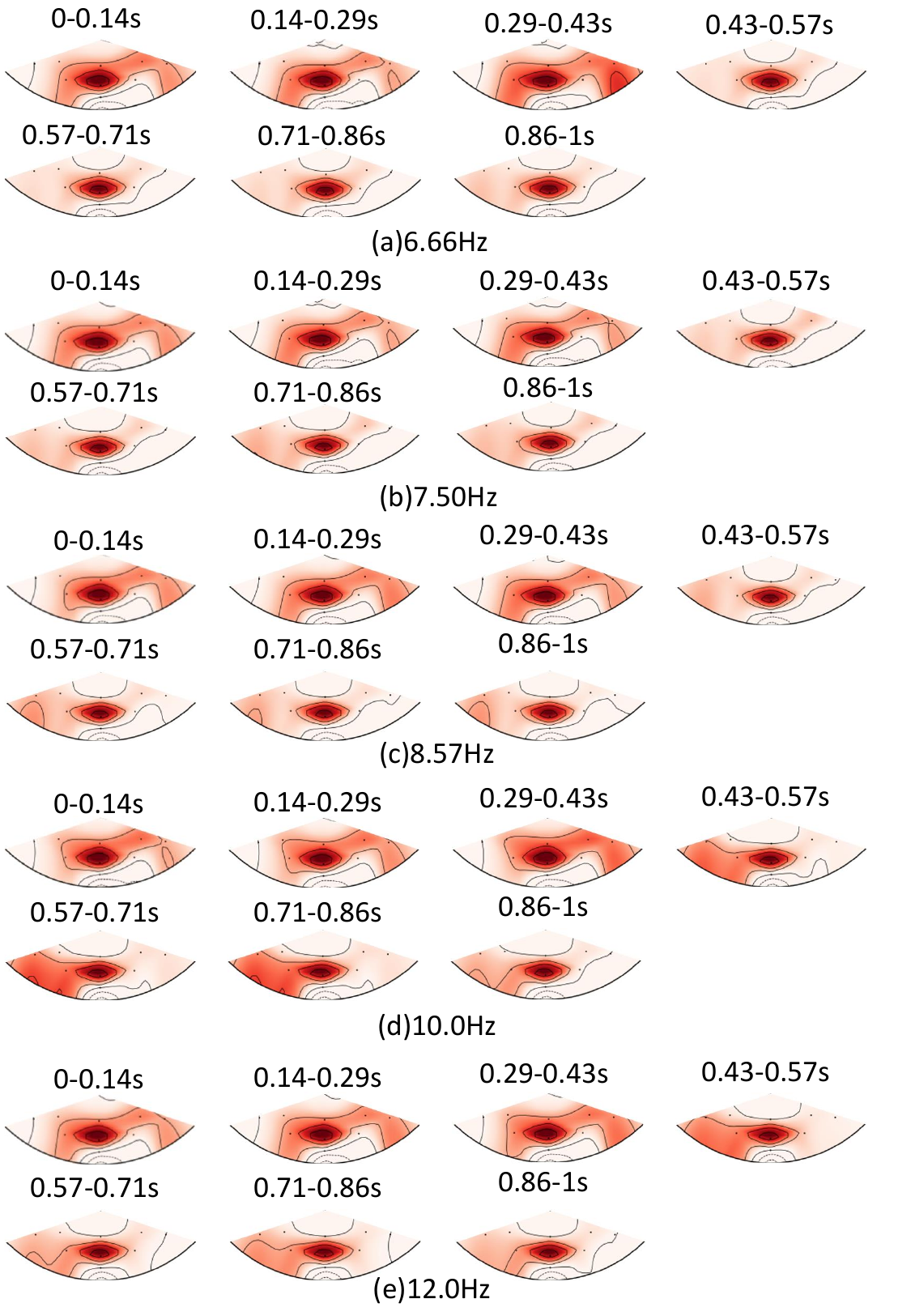}
 \caption{The spatial topomaps at different epochs and frequencies of visual stimulation for the S11 model on the MAMEM-SSVEP-II dataset (dark red signifies strong gradient response).}
 \label{topo}
\end{figure*} 

\end{document}